\newtheorem{theorem}{Theorem}
\newtheorem{corollary}[theorem]{Corollary}
\newtheorem{proposition}[theorem]{Proposition}
\newtheorem{lemma}[theorem]{Lemma}
\newtheoremstyle{named}{}{}{\itshape}{}{\bfseries}{.}{.5em}{\thmnote{#3 }#1} \theoremstyle{named}
\newcommand{\R}{{\mathbb R}}
\newcommand{\E}[1]{{\mathbb E}\left [#1\right]}
\renewcommand{\P}{{\mathbb P}}
\newcommand{\ep}{\varepsilon}
\newcommand{\mB}{\mathcal B}
\newcommand{\mN}{\mathcal N}
\newcommand{\gives}{\ensuremath{\rightarrow}}
\newcommand{\x}{\ensuremath{\times}}
\newcommand{\mO}{\mathcal O}
\newcommand{\nin}{n_{\mathrm{in}}}
\newcommand{\abs}[1]{\ensuremath{\left| #1 \right|}}
\newcommand{\lr}[1]{\ensuremath{\left(#1 \right)}}
\newcommand{\norm}[1]{\left\lVert#1\right\rVert}
\newcommand{\inprod}[2]{\ensuremath{\left\langle#1,#2\right\rangle}}
\newcommand{\twiddle}[1]{\ensuremath{\widetilde{#1}}}
\newcommand{\w}{\omega}
\newcommand{\set}[1]{\ensuremath{\{#1\}}}
\def\XXint#1#2#3{{\setbox0=\hbox{$#1{#2#3}{\int}$} \vcenter{\hbox{$#2#3$}}\kern-.5\wd0}}
\DeclareMathOperator{\Relu}{ReLU}
\DeclareMathOperator{\vol}{vol}
\DeclareMathOperator{\Var}{Var}
\icmltitlerunning{Complexity of Linear Regions in Deep Networks}
\begin{document}
\twocolumn[
\icmltitle{Complexity of Linear Regions in Deep Networks}
\icmlsetsymbol{equal}{*}

\begin{icmlauthorlist}
\icmlauthor{Boris Hanin}{equal,tamu}
\icmlauthor{David Rolnick}{equal,upenn}
\end{icmlauthorlist}

\icmlaffiliation{tamu}{Department of Mathematics, Texas A\&M University and Facebook AI Research, New York}
\icmlaffiliation{upenn}{University of Pennsylvania}

\icmlcorrespondingauthor{Boris Hanin}{bhanin@tamu.edu}
\icmlcorrespondingauthor{David Rolnick}{drolnick@seas.upenn.edu}

\icmlkeywords{ReLU, linear region, expressivity, learnability, initialization, random network}
\vskip 0.3in
]

\printAffiliationsAndNotice{\icmlEqualContribution}
\begin{abstract}
It is well-known that the expressivity of a neural network depends on its architecture, with deeper networks expressing more complex functions. In the case of networks that compute piecewise linear functions, such as those with ReLU activation, the number of distinct linear regions is a natural measure of expressivity. It is possible to construct networks with merely a single region, or for which the number of linear regions grows exponentially with depth; it is not clear where within this range most networks fall in practice, either before or after training. In this paper, we provide a mathematical framework to count the number of linear regions of a piecewise linear network and measure the volume of the boundaries between these regions. In particular, we prove that for networks at initialization, the average number of regions along any one-dimensional subspace grows linearly in the total number of neurons, far below the exponential upper bound. We also find that the average distance to the nearest region boundary at initialization scales like the inverse of the number of neurons. Our theory suggests that, even after training, the number of linear regions is far below exponential, an intuition that matches our empirical observations. We conclude that the practical expressivity of neural networks is likely far below that of the theoretical maximum, and that this gap can be quantified.
\end{abstract}
\begin{figure}
  \centering
\includegraphics[scale=0.35]{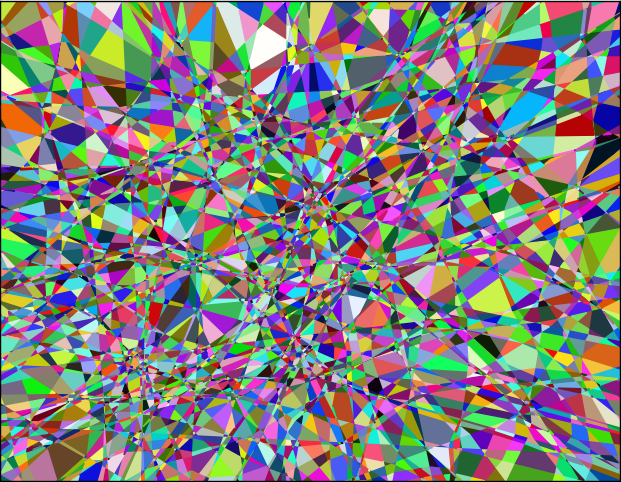}
\caption{How many linear regions? This figure shows a two-dimensional slice through the 784-dimensional input space of vectorized MNIST, as represented by a fully-connected ReLU network with three hidden layers of width 64 each. Colors denote different linear regions of the piecewise linear network.}
\label{fig:2d_regions_init}
\vskip-.2in
\end{figure}

\section{Introduction}
A growing field of theory has sought to explain the broad success of deep neural networks via a mathematical characterization of the ability of these networks to approximate different functions of input data. Many such works consider the \emph{expressivity} of neural networks, showing that certain functions are more efficiently expressible by deep architectures than by shallow ones (e.g.~\citet{bianchini2014complexity, montufar2014number, telgarsky2015representation, lin2017does, rolnick2017power}). It has, however, also been noted that many expressible functions are not efficiently \emph{learnable}, at least by gradient descent \citep{shalev2017failures}. More generally, the \emph{typical} behavior of a network used in practice, the \emph{practical expressivity}, may be very different from what is theoretically attainable. To adequately explain the power of deep learning, it is necessary to consider networks with parameters as they will naturally occur before, during, and after training.

Networks with a piecewise linear activation (e.g.~ReLU, hard $\tanh$) compute piecewise linear functions for which input space is divided into pieces, with the network computing a single linear function on each piece (see Figures \ref{fig:2d_regions_init}-\ref{fig:1d_regions}). Figure \ref{fig:evolution_of_regions} shows how the complexity of these pieces, which we refer to as \emph{linear regions}, changes in a deep ReLU net with two-dimensional inputs. Each neuron in the first layer splits the input space into two pieces along a hyperplane, fitting a different linear function to each of the pieces. Subsequent layers split the regions of the preceding layers. The local density of linear regions serves as a convenient proxy for the local complexity or smoothness of the network, with the ability to interpolate a complex data distribution seeming to require fitting many relatively small regions.
The topic of counting linear regions is taken up by a number of authors \citep{telgarsky2015representation, montufar2014number, serra2017bounding, raghu2017expressive}. 

A worst case estimate is that every neuron in each new layer splits each of the regions present at the previous layer, giving a number of regions exponential in the depth. Indeed this is possible, as examined extensively e.g.~in \citet{montufar2014number}. An example of \citet{telgarsky2015representation} shows that a sawtooth function with $2^n$ teeth can be expressed exactly using only $3n+4$ neurons, as shown in Figure \ref{fig:sawtooth}. However, even slightly perturbing this network (by adding noise to the weights and biases) ruins this beautiful structure and severely reduces the number of linear pieces, raising the question of whether typical neural networks actually achieve the theoretical bounds for numbers of linear regions.

\begin{figure}[htb]
    \centering
    \includegraphics{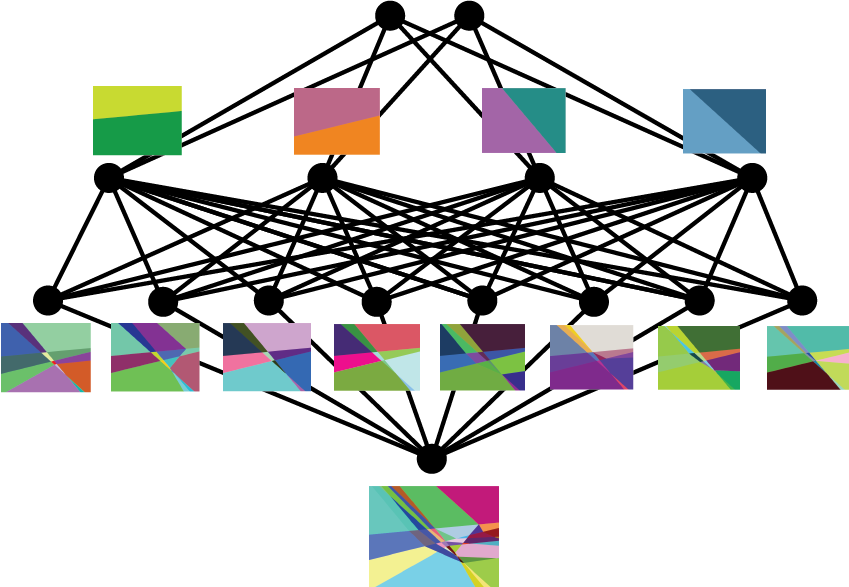}
    \caption{Evolution of linear regions within a ReLU network for 2-dimensional input. Each neuron in the first layer defines a linear boundary that partitions the input space into two regions. Neurons in the second layer combine and split these linear boundaries into higher level patterns of regions, and so on. Ultimately, the input space is partitioned into a number of regions, on each of which the neural network is given by a (different) linear function. During training, both the partition into regions and the linear functions on them are learned.}
    \label{fig:evolution_of_regions}
\end{figure}

Figure \ref{fig:2d_regions_init} also invites  measures of complexity for piecewise linear networks beyond region counting. The boundary between two linear regions can be straight or can be bent in complex ways, for example, suggesting the \textit{volume of the boundary} between linear regions as complexity measure for the resulting partition of input space. In the 2D example of Figure \ref{fig:2d_regions_init}, this corresponds to computing perimeters of the linear pieces. As we detail below, this measure has another natural advantage: the volume of the boundary controls the typical distance from a random input to the boundary of its linear region (see \S\ref{S:highD-inf}). This measures the stability of the function computed by the network, and it is intuitively related to robustness under adversarial perturbation.

\begin{figure}[htb]
\includegraphics[width=.23\textwidth]{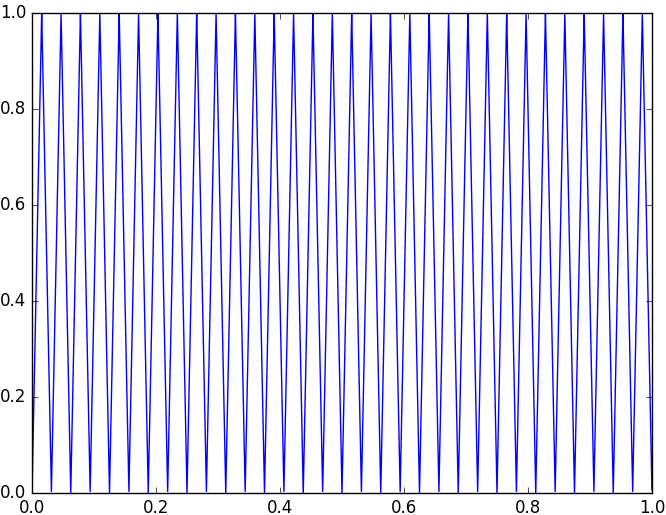}
\includegraphics[width=.23\textwidth]{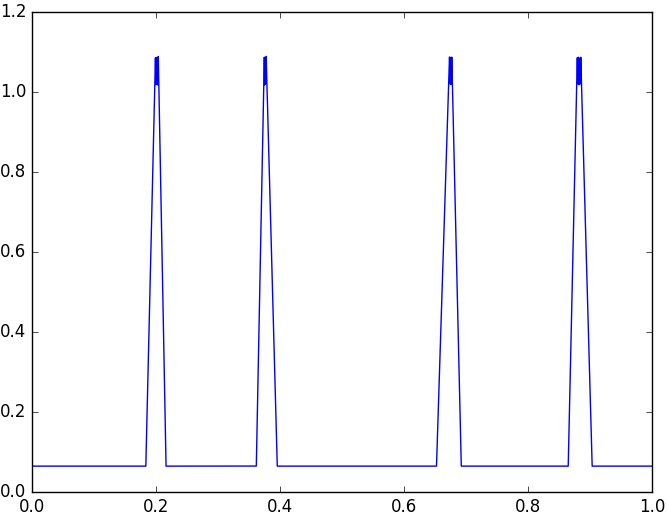}
\vskip.2in
\includegraphics[width=.46\textwidth]{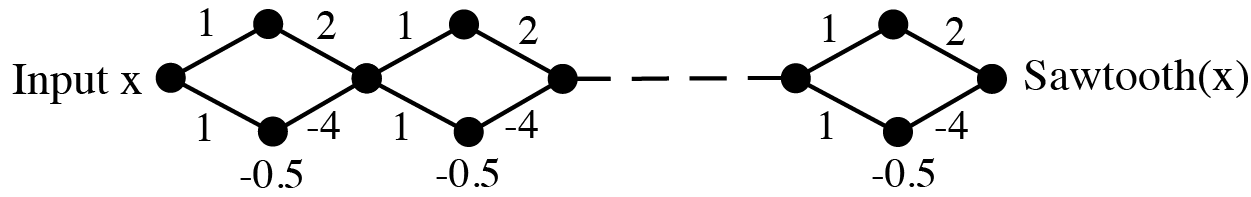}
\caption{The sawtooth function on the left with $2^n$ teeth can be expressed succinctly by a ReLU network with only $3n+4$ neurons (construction from \citet{telgarsky2015representation}). However, slight perturbation of the weights and biases of the network (by Gaussian noise with standard deviation $0.1$) greatly simplifies the linear regions captured by the network.}
\label{fig:sawtooth}
\end{figure}


\textbf{Our Contributions.} In this paper, we provide mathematical tools for analyzing the complexity of linear regions of a network with piecewise linear activations (such as ReLU) before, during, and after training. Our main contributions are as follows:
\begin{itemize}
    \item For networks at initialization, the total surface area of the boundary between linear regions scales as the number of neurons times the number of breakpoints of the activation function. This is our main result, from which several corollaries follow (see Theorem \ref{T:main}, Corollary \ref{C:init-formal-intro}, and the discussion in \S\ref{S:inf}).
    \item In particular, for any line segment through input space, the average number of regions intersecting it is \emph{linear} in the number of neurons, far below the exponential number of regions that is theoretically attainable.
    \item Theorem \ref{T:main} also allows us to conclude that, at initialization, the average distance from a sample point to the nearest region boundary is bounded below by a constant times the reciprocal of the number of neurons (see Corollary \ref{C:dist-formal-intro}).
    \item We find empirically that both the number of regions and the distance to the nearest region boundary stay roughly constant during training and in particular are far from their theoretical maxima. That this should be the case is strongly suggested by Theorem \ref{T:main}, though not a direct consequence of it.
\end{itemize}

Overall, our results stress that practical expressivity lags significantly behind theoretical expressivity. Moreover, both our theoretical and empirical findings suggest that for certain measures of complexity, trained deep networks are remarkably similar to the same networks at initialization. 

In the next section, we informally state our theoretical and empirical results and explore the underlying intuitions. Detailed descriptions of our experiments are provided in \S\ref{S:experiments}.  The precise theorem statements for ReLU networks can be found in \S\ref{S:formal}. The exact formulations for general piecewise linear networks are in Appendix \ref{S:formal-general}, with proofs  in the rest of the Supplementary Material. In particular, Appendix \ref{S:outlines} contains intuition for how our proofs are shaped, while details are completed in \S\ref{S:main-proof}-\ref{S:init-pf}.
\begin{figure}[htb]
  \centering
\includegraphics[scale=0.3]{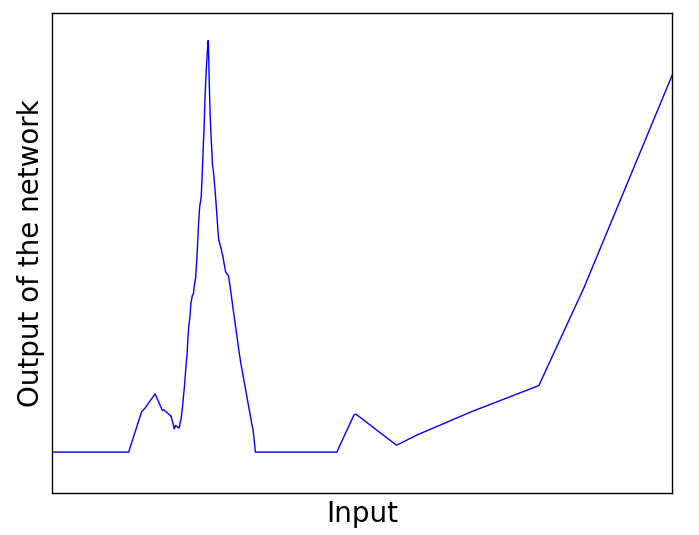}
\caption{Graph of function computed by a $\Relu$ net with input and output dimension $1$ at initialization. The weights of the network are He normal (i.i.d.~normal with variance = $2/$fan-in) and the biases are i.i.d.~normal with variance $10^{-6}$.}
\label{fig:1d_regions}
\end{figure}
\section{Informal Overview of Results}\label{S:inf}
This section gives an informal introduction to our results. We begin in \S \ref{S:1d-res} by describing the case of networks with input dimension $1.$ In \S \ref{S:highD-inf}, we consider networks with higher input dimension. For simplicity, we focus throughout this section on fully connected ReLU networks. We emphasize, however, that our results apply to any piecewise linear activation. Moreover, the upper bounds we present in Theorems \ref{T:1d-inf}, \ref{C:dist}, and \ref{T:main} (and hence in Corollaries \ref{C:init-formal-intro} and \ref{C:dist-formal-intro}) can also be generalized to hold for feed-forward networks with arbitrary connectivity, though we do not go into details in this work, for the sake of clarity of exposition.

\subsection{Number of Regions in 1D}\label{S:1d-res} Consider the simple case of a $\Relu$ net $\mN$ with input and output dimensions equal to $1.$ Such a network computes a piecewise linear function (see Figure \ref{fig:1d_regions}), and we are interested in understanding both at initialization and during training the number of distinct linear regions. There is a simple universal upper bound:
\begin{equation}\label{E:univ-UB}
\text{max }\# \set{\mathrm{regions}}~\leq~ 2^{\# \mathrm{neurons}},
\end{equation}
where the maximum is over all settings of weight and biases. This bound depends on the architecture of $\mN$ only via the number of neurons. For more refined upper bounds which take into account the widths of the layers, see Theorem 1 in \citet{raghu2017expressive} and Theorem 1 in \citet{serra2017bounding}. 

The constructions in \citet{montufar2014number, telgarsky2015representation, raghu2017expressive, serra2017bounding} indicate that the bound in \eqref{E:univ-UB} is very far from sharp for shallow and wide networks but that exponential growth in the number of regions can be achieved in deep, skinny networks for very special choices of weights and biases. This is a manifestation of the expressive power of depth, the idea that repeated compositions allow deep networks to capture complex hierarchical relations more efficiently per parameter than their shallow cousins. However, there is no non-trivial lower bound for the number of linear regions:
\[\text{min }\#\set{\mathrm{regions}}~=~1,\qquad \forall \mN.\]
The minimum is attained by setting all weights and biases to $0.$ This raises the question of the behavior for the average number of regions when the weights and biases are chosen at random (e.g. at initialization). Intuitively, configurations of weights and biases that come close to saturating the exponential upper bound \eqref{E:univ-UB} are numerically unstable in the sense that a small random perturbation of the weights and biases drastically reduces the number of linear regions (see Figure \ref{fig:sawtooth} for an illustration). In this direction, we prove a somewhat surprising answer to the question of how many regions $\mN$ has at initialization. We state the result for $\Relu$ but note that it holds for any piecewise linear, continuous activation function (see Theorems \ref{T:main} and \ref{T:main-general}).

\begin{theorem}[informal]{\label{T:1d-inf}}
Let $\mN$ be a network with piecewise linear activation with input and output dimensions of $\mN$ both equal $1$. Suppose the weights and biases are randomly initialized so that for each neuron $z$, its pre-activation $z(x)$ has bounded mean gradient 
\begin{equation}\label{E:good-init}
    \E{\norm{\nabla z(x)}}\leq C,\qquad \text{some }C>0.
\end{equation}
This holds, for example, for $\Relu$ networks initialized with independent, zero-centered weights with variance $2/\text{fan-in}.$ Then, for each subset $I\subset \R$ of inputs, the average number of linear regions inside $I$ is proportional to the number of neurons times the length of $I$
\[\E{\#\set{\mathrm{regions~in~}I}} ~\approx~  \abs{I}\cdot T \cdot \#\set{\mathrm{neurons}},\]
where $T$ is the number of breakpoints in the non-linearity of $\mN$ (for ReLU nets, $T=1$). The same result holds when computing the number of linear regions along any fixed $1$-dimensional curve in a high-dimensional input space. 
\end{theorem}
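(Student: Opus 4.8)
The plan is to reduce region-counting to a level-crossing problem and then apply an integral-geometric (Kac--Rice type) identity. Along the interval $I$ (or, in the high-dimensional case, along the curve parametrized by arc length), each neuron's pre-activation $z(x)$ is itself a continuous piecewise linear function of the scalar input, and the function computed by $\mN$ is linear on a maximal subinterval precisely as long as no neuron's pre-activation crosses one of the $T$ breakpoints of the activation. Hence, almost surely (by genericity of a continuous random initialization, so that no two neurons break simultaneously and none breaks at an endpoint of $I$),
\[
\#\set{\text{regions in } I} = 1 + \sum_{\text{neurons } z}\ \sum_{j=1}^{T} \#\set{x\in I : z(x) = t_j},
\]
where $t_1,\dots,t_T$ are the breakpoints of the activation (for $\Relu$, $T=1$ and $t_1=0$). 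So it suffices to understand $\E{\#\set{x\in I : z(x) = t_j}}$ for a single neuron and a single level.

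First I would establish the co-area identity $\int_{\R}\#\set{x\in I:z(x)=t}\,dt = \int_I \abs{z'(x)}\,dx$, valid for any piecewise linear $z$, and take expectations to get $\int_\R \E{\#\set{x\in I:z(x)=t}}\,dt = \int_I\E{\abs{z'(x)}}\,dx \leq C\abs{I}$ by hypothesis \eqref{E:good-init}. This bounds the average total variation, but to pin the count at the specific levels $t_j$ I would pass to the Kac--Rice form $\E{\#\set{x\in I:z(x)=t_j}} = \int_I \E{\abs{z'(x)}\mid z(x)=t_j}\,p_{z(x)}(t_j)\,dx$, where $p_{z(x)}$ is the density of the scalar random variable $z(x)$. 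Summing over the finitely many neurons and breakpoints and integrating over $I$ then yields an exact expression for $\E{\#\set{\text{regions in }I}}$.

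The upper bound direction follows by bounding $p_{z(x)}$ from above and the conditional slope by the unconditional one, so that each neuron--breakpoint pair contributes at most a constant multiple of $\abs{I}$; the matching lower bound requires a uniform lower bound on $p_{z(x)}(t_j)$, which is exactly where the initialization is used: for zero-centered He-type weights and small biases, $z(x)$ is centered near the $\Relu$ breakpoint with an $O(1)$ spread, so its density there is bounded away from $0$ uniformly in the neuron. Combining the two bounds gives $\E{\#\set{\text{regions in }I}} \approx \abs{I}\cdot T\cdot\#\set{\text{neurons}}$, with the hidden constant absorbing $C$ and the density normalization.

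The main obstacle I anticipate is justifying and exploiting the Kac--Rice identity for these non-smooth random functions: $z(x)$ has kinks, its slope $z'(x)$ is discontinuous, and, crucially, the slope is not independent of the value $z(x)$, since both are fixed by the random activation pattern of the earlier layers. Controlling the joint law of $\lr{z(x),z'(x)}$ at the breakpoint level, in particular showing that $p_{z(x)}(t_j)$ is bounded above and below uniformly across neurons of arbitrary depth, is the technical heart; the lower bound (hence the two-sided $\approx$, rather than a mere upper bound) is the more delicate half. The high-dimensional curve case reduces to the $1$-dimensional one by pulling back each pre-activation to a piecewise linear function of arc length and replacing $\abs{z'(x)}$ by the directional derivative along the curve, which is dominated by $\norm{\nabla z(x)}$, so the same estimates apply.
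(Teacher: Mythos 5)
Your overall strategy---reduce region counting along the line to breakpoint crossings of the pre-activations, then compute the expected number of crossings by a Kac--Rice/co-area identity---is the same skeleton as the paper's (Theorem \ref{T:main} specialized to $k=1$, $\nin=1$, proved via Propositions \ref{P:B-decomp}--\ref{P:vol-rep}). But the proposal has a genuine gap at exactly the step you yourself flag as ``the technical heart.'' You set up Kac--Rice with respect to the density $p_{z(x)}$ of the pre-activation and the conditional law of $z'(x)$ given $z(x)=t_j$, and you then need upper and lower bounds on $p_{z(x)}(t_j)$ uniformly over neurons of arbitrary depth. This object is not directly accessible: the law of $z(x)$ is a complicated functional of all the weights, the value and the slope are strongly dependent (both are determined by the activation pattern of earlier layers), and the weight-only part of $z(x)$ can even carry atoms (with probability roughly $2^{-n_{j-1}}$ an entire previous layer is dead at $x$, freezing the pre-activation at a constant). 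The only source of regularity at the breakpoint is the bias, and this is precisely how the paper proceeds: by Assumption A1 the bias $b_z$ has a (conditional) density $\rho_{b_z}$ given everything else, so one freezes the weights and all other biases and applies the \emph{deterministic} co-area formula in the bias variable alone, yielding $\E{\#\set{\mathrm{crossings~in~}I}}=\int_I\E{\abs{z'(x)}\,\rho_{b_z}(z(x)-\xi)\,{\bf 1}_{\set{z\text{ good at }x}}}dx$ (Proposition \ref{P:vol-rep}). No conditioning on $z(x)$ and no density for $z(x)$ is ever needed; the upper bound then uses $\sup_b\rho_{b_z}(b)$ together with \eqref{E:good-init}, and the lower bound uses $\inf_{\abs{b}\leq\eta}\rho_{b_z}(b)$ plus a second-moment bound on $z(x)$ via Markov---again only bias-density bounds. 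If you tried to make your bounds on $p_{z(x)}(t_j)$ rigorous, you would be forced to write $p_{z(x)}(t_j)=\E{\rho_{b_z}(\cdot)}$ and rediscover this argument, so the proof should be restructured around conditioning on the biases rather than on the value of $z(x)$.

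A second, smaller but real, error: your identity $\#\set{\mathrm{regions~in~}I}=1+\sum_z\sum_j\#\set{x\in I: z(x)=t_j}$ is false for ReLU. A neuron can cross its breakpoint while being \emph{dead}---every path from it to the output blocked by a closed ReLU---in which case the crossing creates no boundary between linear regions; at initialization this happens with positive probability (about $2^{-n_j}$ per downstream layer). This is why the paper's decomposition is $\mB_{\mN}=\bigcup_z \twiddle{S}_z$ with $\twiddle{S}_z=S_z\cap\mO$ rather than $\bigcup_z S_z$ (Proposition \ref{P:B-decomp}), why the formula in Theorem \ref{T:main} carries the indicator that each neuron is ``good at $x$,'' and why the lower bound at initialization invokes $\P\lr{z\text{ good at }x}\geq 1-\sum_j 2^{-n_j}$. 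Your genericity assumption (no simultaneous crossings) does not address this; as stated your count is an overcount, and the surviving one-sided bound is the upper bound only, so the two-sided $\approx$ does not follow without the ``good neuron'' correction.
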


This theorem implies that the average number of regions along a one-dimensional curve in input space is proportional to the number of neurons, but independent of the arrangement of those neurons. In particular, a shallow network and a deep network will have the same complexity, by this measure, as long as they have the same total number of neurons. Of course, as $\abs{I}$ grows, the bounds in Theorem \ref{T:1d-inf} become less sharp. We plan to extend our results to obtain bounds on the total number of regions on all of $\R$ in the future. In particular, we believe that at initialization the mean total number of linear regions $\mN$ is proportional to the number of neurons (this is borne out in Figure \ref{fig:num_1d_regions}, which computes the total number of regions on an infinite line).

Theorem \ref{T:1d-inf} defies the common intuition that, on average, each layer in $\mN$ multiplies the number of regions formed up to the previous layer by a constant larger than one. This would imply that the average number of regions is exponential in the depth. To provide intuition for why this is not true for \textit{random} weights and biases, consider the effect of each neuron separately. Suppose the pre-activation $z(x)$ of a neuron $z$ satisfies $\abs{z'(x)}=\Theta(1)$, a hallmark of any reasonable initialization. Then, over a compact set of inputs, the piecewise linear function $x\mapsto z(x)$ cannot be highly oscillatory over a large portion of the range of $z$. Thus, if the bias $b_z$ is not too concentrated on any interval, we expect the equation $z(x)=b_z$ to have $O(1)$ solutions. On average, then, we expect that each neuron adds a \textit{constant number} of new linear regions. Thus, the average total number of regions should scale roughly as the number of neurons. 

Theorem \ref{T:1d-inf} follows from a general result, Theorem \ref{T:main}, that holds for essentially any non-degenerate distribution of weights and biases and with any input dimension. If $\norm{\nabla z(x)}$ and the bias distribution $\rho_{b_z}$ are well-behaved, then throughout training, Theorem \ref{T:main} suggests the number of linear regions along a $1$-dimensional curve in input space scales like the number of neurons in $\mN$. Figures \ref{fig:num_1d_regions}-\ref{fig:dist_to_boundary} show experiments that give empirical verification of this heuristic. 
\begin{figure*}[htb]
\centering
\includegraphics[width=.42\textwidth]{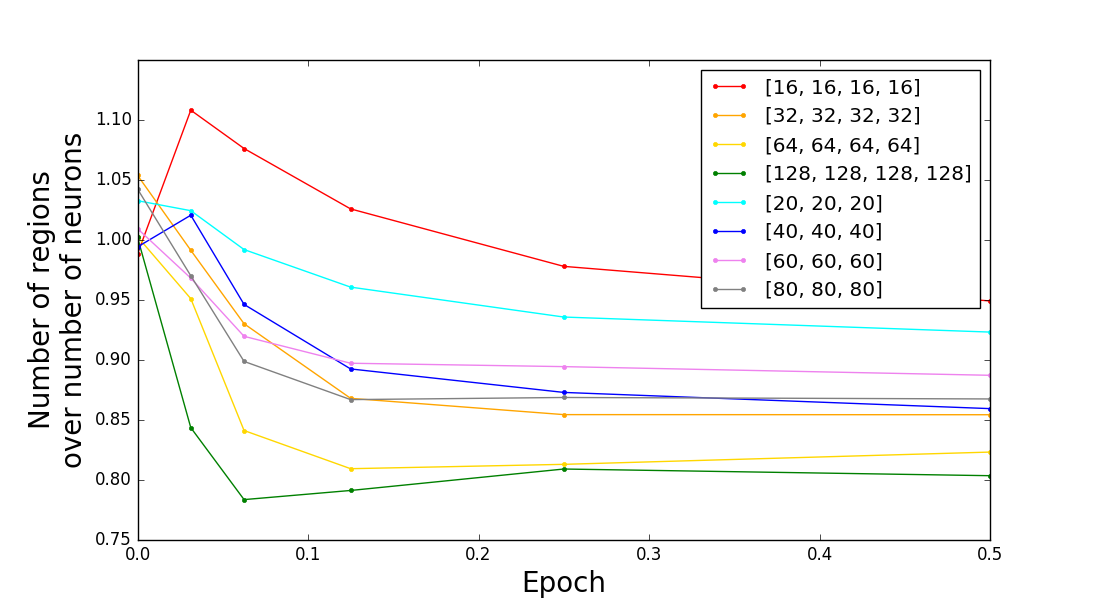}
\includegraphics[width=.42\textwidth]{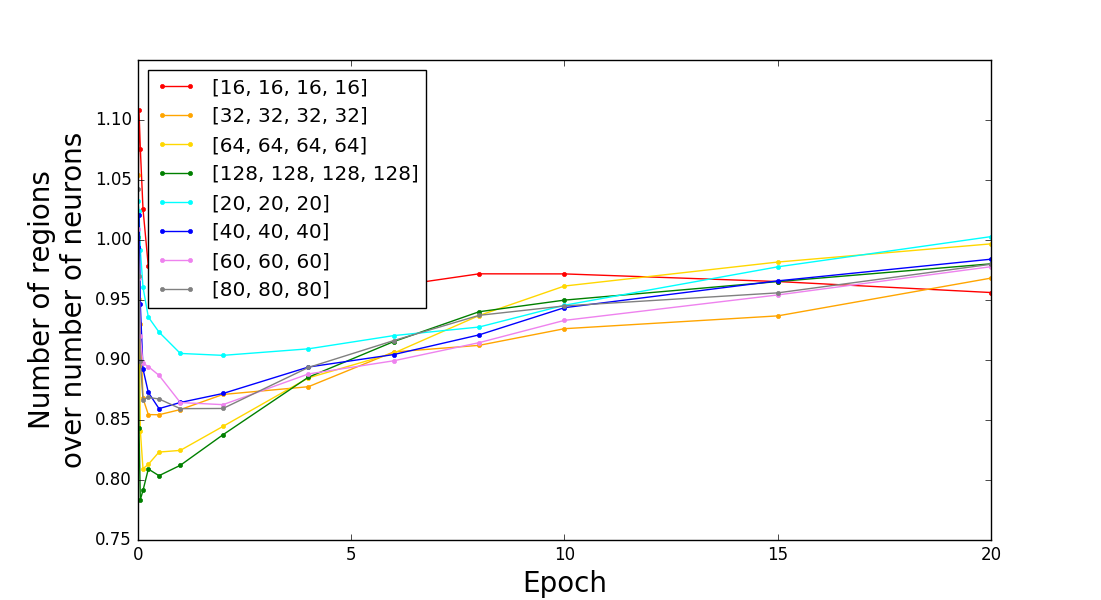}
\caption{We here show how the number of regions along 1D lines in input space changes during training. In accordance with Theorem \ref{T:main}, we scale the number of regions by the number of neurons. Plots show (a) early training, up through 0.5 epochs, and (b) later training, up through 20 epochs. Note that for all networks, number of regions is a fixed constant times the number of neurons at initialization, as predicted, and that the number decreases (slightly) early in training before rebounding. $[n_1,n_2,n_3]$ in the legend corresponds to an architecture with layer widths $784\text{ (input)},n_1,n_2,n_3,10\text{ (output)}$.}
\label{fig:num_1d_regions}
\end{figure*}
\subsection{Higher-Dimensional Regions}\label{S:highD-inf} For networks with input dimension exceeding $1,$ there are several ways to generalize counting linear regions. A unit-matching heuristic applied to Theorem \ref{T:1d-inf} suggests
\[\#\set{\mathrm{regions}}~=~\#\set{\mathrm{neurons}}^{\nin},\quad \nin=\text{input dim}.\]
Proving this statement is work in progress by the authors. Instead, we consider here a natural and, in our view, equally important generalization. Namely, for a bounded $K\subset \R^{\nin}$, we consider the $(\nin-1)$-dimensional volume density 
\begin{align}
\label{E:vol-den-def-2}\vol_{\nin-1}\lr{\mB_{\mN}\cap K}~\big/\vol_{\nin}(K),
\end{align}
where
\begin{equation}\label{E:B-def}
\mB_{\mN}=\set{x~|~\nabla \mN(x)\text{ is not continuous at }x}\end{equation}
is the boundary of the linear regions for $\mN$. When $\nin=1$, 
\[\vol_{0}\lr{\mB_{\mN}\cap K}~+~1~~=~~\#\set{\mathrm{regions~in~}K},\]
and hence the volume density \eqref{E:vol-den-def-2} truly generalizes to higher input dimension of the number of regions. One reason for studying the volume density \eqref{E:vol-den-def-2} is that it gives bounds from below for $\mathrm{distance}\lr{x,\mB_{\mN}}$, which in turn provides insight into the nature of the computation performed by $\mN.$ Indeed, the exact formula
\[\mathrm{distance}\lr{x,\mB_{\mN}}=\min_{\mathrm{neurons~z}}\left\{\abs{z(x)-b_z}\big/\norm{\nabla z(x)}\right\},\]
shows that $\mathrm{distance}\lr{x,\mB_{\mN}}$ measures the sensitivity over neurons at a given input $x$. In this formula,  $z(x)$ denotes the pre-activation for a neuron $z$ and $b_z$ is its bias, so that $\Relu(z(x)-b_z)$ is the post-activation. Moreover, the distance from a typical point to $\mB_{\mN}$ gives a heuristic lower bound for the typical distance to an adversarial example: two inputs closer than the typical distance to a linear region boundary likely fall into the same linear region, and hence are unlikely to be classified differently. Our next result generalizes Theorem \ref{T:1d-inf}. 

\begin{theorem}[informal]\label{C:dist}
Let $\mN$ be a network with a piecewise linear activation, input dimension $\nin$ and output dimension $1.$ Suppose its weights and biases are randomly initialized as in \eqref{E:good-init}. Then, for $K\subset \R^{d_{in}}$  bounded, the average volume of the linear region boundaries in $K$ satisfies:
\[\E{\frac{\vol_{\nin-1}\lr{\mB_{\mN}\cap K}}{\vol_{\nin}(K)}}~\approx~T\cdot \#\set{\mathrm{neurons}},\]
where $T$ is the number of breakpoints in the non-linearity of $\mN$ (for ReLU nets, $T=1$). Moreover, if $x\in [0,1]^{\nin}$ is uniformly distributed, then the average, over both $x$ and the weights/biases of $\mN$, distance from $x$ to $\mB_{\mN}$ satisfies
\[\E{\mathrm{distance}\lr{x,\mB_{\mN}}}~\geq~ C \lr{\# \set{\mathrm{neurons}}}^{-1},\quad C>0.\]
\end{theorem}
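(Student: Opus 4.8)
The plan is to treat the two claims separately but reduce both to a single per-neuron computation. For a $\Relu$ net one has $\mB_{\mN}\subseteq \bigcup_z \set{x : z(x)=b_z}$, and for a general piecewise linear activation with $T$ breakpoints each neuron $z$ contributes $T$ such level-set hypersurfaces (one per breakpoint). The engine for both parts is the exact identity, obtained by applying the co-area formula to the Lipschitz map $x\mapsto z(x)$, that for fixed weights
\[
\E_{b_z}\!\left[\vol_{\nin-1}\lr{\set{z=b_z}\cap K}\right]=\int_K \rho_{b_z}(z(x))\,\norm{\nabla z(x)}\,dx,
\]
where $\rho_{b_z}$ is the density of the bias. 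Averaging over the weights and using that $\rho_{b_z}$ is essentially flat on the range of $z$ over $K$, together with the hypothesis $\E{\norm{\nabla z(x)}}\leq C$ (and a matching $\geq c$ for any nondegenerate initialization), shows that each neuron contributes boundary volume comparable to $\vol_{\nin}(K)$.

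For the volume statement I would sum this per-neuron estimate over all $T\cdot \#\set{\mathrm{neurons}}$ hypersurfaces. The upper bound is immediate from subadditivity of $\vol_{\nin-1}$. For the matching lower bound I would observe that pairwise intersections $\set{z=b_z}\cap\set{z'=b_{z'}}$ are generically of codimension two and hence carry zero $(\nin-1)$-dimensional Hausdorff measure, so inclusion--exclusion collapses and the volume of the union equals the sum of the individual volumes up to negligible error. Dividing by $\vol_{\nin}(K)$ yields $\E{\vol_{\nin-1}(\mB_{\mN}\cap K)/\vol_{\nin}(K)}\approx T\cdot \#\set{\mathrm{neurons}}$.

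For the distance bound I would use the exact formula $\dist(x,\mB_{\mN})=\min_z \abs{z(x)-b_z}/\norm{\nabla z(x)}$ stated above, which identifies the tube $\set{x:\dist(x,\mB_{\mN})\leq s}$ with $\bigcup_z\set{x:\abs{z(x)-b_z}\leq s\norm{\nabla z(x)}}$, the union ranging over all $T\cdot\#\set{\mathrm{neurons}}$ neuron--breakpoint pairs. By the union bound and Fubini, the expected volume of this tube inside $K$ is at most $\sum_z\int_K \P_{b_z}\lr{\abs{z(x)-b_z}\leq s\norm{\nabla z(x)}}\,dx\leq 2s\norm{\rho_{b_z}}_\infty\sum_z\int_K\E{\norm{\nabla z(x)}}\,dx\leq M s\,\vol_{\nin}(K)$, with $M=2C\norm{\rho_{b_z}}_\infty T\#\set{\mathrm{neurons}}$, using $\P_{b_z}(\abs{z(x)-b_z}\leq\delta)\leq 2\delta\norm{\rho_{b_z}}_\infty$. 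Hence $\P(\dist(x,\mB_{\mN})>s)\geq 1-Ms$ after averaging over the weights, and integrating $\E{\dist(x,\mB_{\mN})}=\int_0^\infty \P(\dist>s)\,ds\geq \int_0^{1/M}(1-Ms)\,ds=1/(2M)$ gives the claimed lower bound $\gtrsim 1/\#\set{\mathrm{neurons}}$, with constant depending only on $C$, $T$, and $\norm{\rho_{b_z}}_\infty$. Notably, this direction needs only the upper gradient bound, not the two-sided volume estimate.

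The hard part is making the per-neuron identity rigorous for deep networks, where the pre-activation $z(x)$ of a neuron in a late layer is itself piecewise linear and where $z$ contributes to $\mB_{\mN}$ only on the subregion where its activation is live. Controlling this active region, verifying that $\norm{\nabla z(x)}$ concentrates away from $0$ and $\infty$, and checking that $\rho_{b_z}$ is effectively constant on the relevant range of $z$ are precisely the estimates packaged into Theorem \ref{T:main}; the co-area and union-bound skeleton above is exactly what that theorem is designed to feed.
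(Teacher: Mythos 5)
Your treatment of the volume statement follows the paper's own route: your per-neuron co-area identity is exactly the $k=1$ case of Theorem \ref{T:main} (i.e.~Proposition \ref{P:vol-rep}), your subadditivity upper bound plus the codimension-two argument for collapsing the union mirror Propositions \ref{P:B-decomp} and \ref{P:k-dim-pts}, and you correctly defer the genuinely hard two-sided estimates (goodness of neurons, lower bounds on $\E{\norm{\nabla z(x)}\,\rho_{b_z}(z(x))}$, which in the paper require the initialization-specific facts \eqref{E:open-est}--\eqref{E:log-grad-est}) to the machinery behind Theorem \ref{T:main} and Corollary \ref{C:init-formal}. At the informal level of the statement, that part is fine.

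The distance bound is where you genuinely diverge from the paper, and where there is a gap. Your argument hinges on the inclusion
\[
\set{x : \dist(x,\mB_{\mN})\leq s}~\subseteq~\bigcup_z \set{x : \abs{z(x)-b_z}\leq s\norm{\nabla z(x)}},
\]
i.e.~on taking the ``exact formula'' $\dist(x,\mB_{\mN})=\min_z \abs{z(x)-b_z}/\norm{\nabla z(x)}$ at face value. For a deep network this is not justified: the pre-activation $z$ of a good neuron is only \emph{piecewise} linear, and its gradient can change between $x$ and the nearest boundary point $p$ at kinks created by neurons that switch without being good. Such kinks are invisible in $\mB_{\mN}$ (if the switching neuron were good there, the kink would itself be a boundary point closer than $p$, a contradiction; if it is not good, nothing prevents the kink from lying strictly inside the segment $[x,p]$). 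One then only gets $\abs{z(x)-b_z}\leq s\cdot\sup_{y\in[x,p]}\norm{\nabla z(y)}$, so your union bound would require controlling $\E{\sup_{\norm{y-x}\leq s}\norm{\nabla z(y)}}$, which is supplied neither by the pointwise hypothesis \eqref{E:good-init} nor by Theorem \ref{T:main}. This is precisely the difficulty the paper's proof is built to avoid: rather than linearizing per neuron, it bounds $\E{\vol_{\nin}(T_\ep(\mB_{\mN}))}$ geometrically via Lemma \ref{L:tube-vol} applied to the actual stratification $\mB_{\mN}=\bigcup_k \mB_{\mN,k}$, and then invokes Theorem \ref{T:main}/Corollary \ref{C:init-formal} for \emph{every} codimension $k=1,\ldots,\nin$, not just $k=1$, because the tube volume near lower-dimensional strata is governed by those strata's volumes. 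If your inclusion could be repaired (say, by a uniform gradient bound over small balls), your route would indeed be attractively simpler --- only the $k=1$ estimate, bias anti-concentration, and a layer-cake integration; as written, however, that key step is unproven, and it is exactly where the piecewise-linear structure of deep pre-activations bites.
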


Experimentally, $\mathrm{distance}\lr{x,\mB_{\mN}}$ remains comparable to $\lr{\# \set{\mathrm{neurons}}}^{-1}$ throughout training (see Figure \ref{fig:dist_to_boundary}).

\section{Experiments}\label{S:experiments}
We empirically verified our theorems and further examined how linear regions of a network change during training. All experiments below were performed with fully-connected networks, initialized with He normal weights (i.i.d.~with variance $2/\text{fan-in}$) and biases drawn i.i.d.~normal with variance $10^{-6}$ (to prevent collapse of regions at initialization, which occurs when all biases are uniquely zero). Training was performed on the vectorized MNIST (input dimension 784) using the Adam optimizer at learning rate $10^{-3}$. All networks attain test accuracy in the range $95-98\%$.
\begin{figure*}[htb]
  \centering
\includegraphics[width=0.33\textwidth]{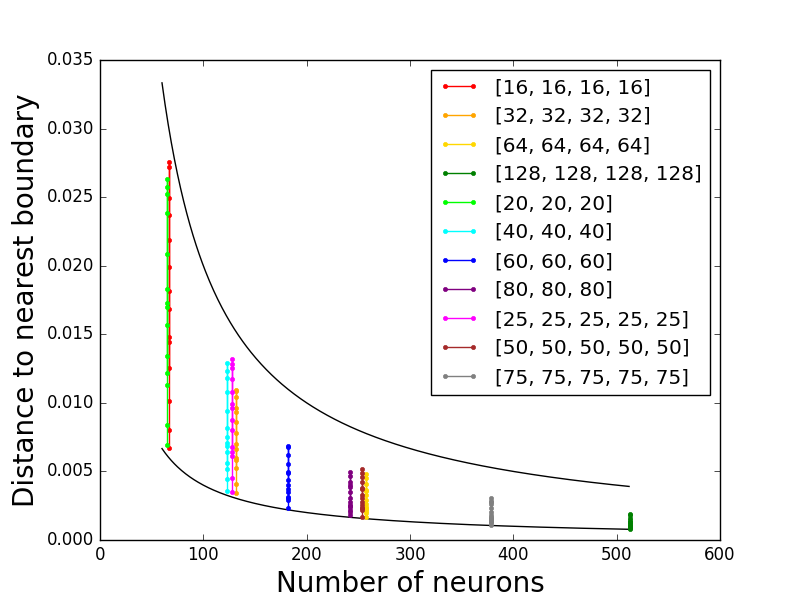}
\includegraphics[width=0.33\textwidth]{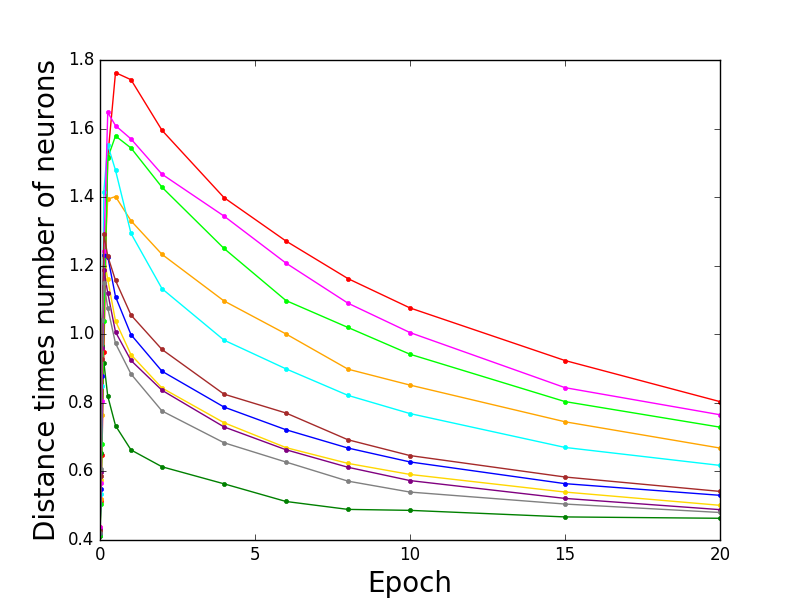}
\includegraphics[width=0.33\textwidth]{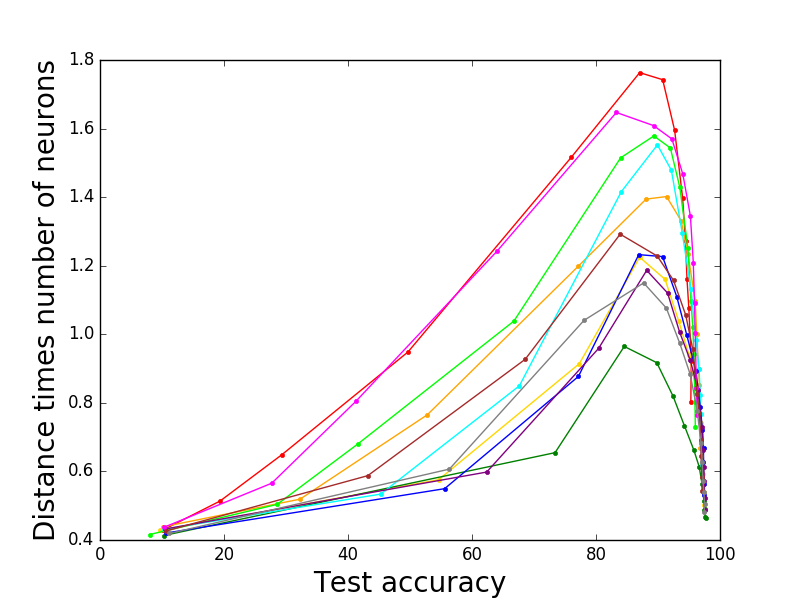}
\caption{We here consider the average distance to the nearest boundary, as evaluated over 10000 randomly selected sample points. In (a) we show that this distance is essentially bounded between $0.4/\#\set{\mathrm{neurons}}$ and $1.5/\#\set{\mathrm{neurons}}$. Accordingly, in the next plot, we normalize the distance to the nearest boundary by dividing by the number of neurons. We plot this quantity against (b) epoch and (c) test accuracy. Observe that, in keeping with the findings of Figure \ref{fig:num_1d_regions}, the distance to the nearest boundary first increases quickly (as the number of regions decreases), then rebounds more slowly as the network completes training. $[n_1,n_2,n_3]$ in the legend corresponds to an architecture with layer widths $784\text{ (input)},n_1,n_2,n_3,10\text{ (output)}$.}
\label{fig:dist_to_boundary}
\end{figure*}
\subsection{Number of Regions Along a Line}
We calculated the number of regions along lines through the origin and and a random selected training example in input space. For each setting of weights and biases within the network during training, the number of regions along each line is calculated exactly by building up the network one layer at a time and calculating how each region is split by the next layer of neurons. Figure \ref{fig:num_1d_regions} represents the average over 5 independent runs, from each of which we sample 100 lines; variance across the different runs is not significant.

Figure \ref{fig:num_1d_regions} plots the average number of regions along a line, divided by the number of neurons in the network, as a function of epoch during training. We make several observations:
\begin{enumerate}
    \item As predicted by Theorem \ref{T:main}, all networks start out with the number of regions along a line equal to a constant times the number of neurons in the network (the constant in fact appears very close to 1 in this case).
    \item Throughout training, the number of regions does not deviate significantly from the number of neurons in the network, staying within a small constant of the value at initialization, in keeping with our intuitive understanding of Theorem \ref{T:main} described informally around Theorem \ref{T:1d-inf} above.
    \item The number of regions actually decreases during the initial part of training, then increases again. We explore this behavior further in other experiments below.
\end{enumerate}
\subsection{Distance to the Nearest Region Boundary}
We calculated the average distance to the nearest boundary for $10000$ randomly selected input points, for various networks throughout training. Points were selected randomly from a normal distribution with mean and variance matching the componentwise mean and variance of MNIST training data. Results were averaged over $12$ independent runs, but variance across runs is not significant. Rerunning these experiments with sample points selected randomly from (i) the training data or (ii) the test data  yielded similar results to random sample points.

In keeping with our results in the preceding experiment, the distance to the nearest boundary first increases then decreases during training. As predicted by Theorem \ref{C:dist}, we find that for all networks, the distance to the nearest boundary is well-predicted by $1/\#\set{\mathrm{neurons}}$. Throughout training, we find that it approximately varies between the curves $0.4/\#\set{\mathrm{neurons}}$ and $1.5/\#\set{\mathrm{neurons}}$ (Figure \ref{fig:dist_to_boundary}(a)). At initialization, as we predict, all networks have the same value for the product of number of neurons and distance to the nearest region boundary (Figure \ref{fig:dist_to_boundary}(b)); these products then diverge (slightly) for different architectures, first increasing rapidly and then decreasing more slowly.

We find Figure \ref{fig:dist_to_boundary}(c) fascinating, though we do not completely understand it. It plots the product of number of neurons and distance to the nearest region boundary against the test accuracy. It suggests two phases of training: first regions expand, then they contract. This lines up with observations made in \citet{arpit2017closer} that neural networks ``learn patterns first'' on which generalization is simple and then refine the fit to encompass memorization of individual samples. A generalization phase would suggest that regions are growing, while memorization would suggest smaller regions are fit to individual data points. This is, however, speculation and more experimental (and theoretical) exploration will be required to confirm or disprove this intuition.
\begin{figure}[htb]
  \centering
\includegraphics[width=0.4\textwidth]{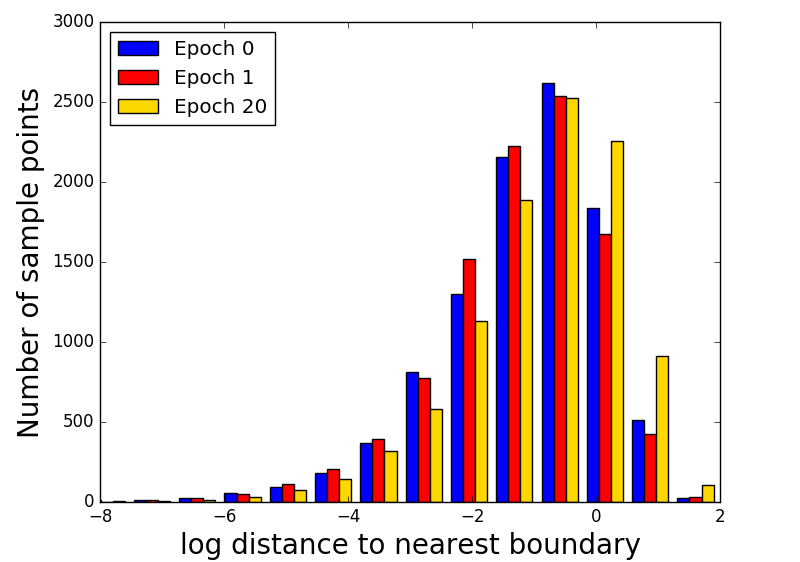}
\caption{Distribution of $\log$ distances from random sample points to the nearest region boundary for a network of depth 4 and width 16, at initialization and after 1 and 20 epochs of training on MNIST.}
\label{fig:distribution}
\vskip-.1in
\end{figure}
We found it instructive to consider the full distribution of distances from sample points to their nearest boundaries, rather than just the average. For a single network (depth 4, width 16), Figure \ref{fig:distribution} indicates that this distribution does not significantly change during training, although there appears to be a slight skew towards larger regions, in agreement with the findings in \citet{novak2018sensitivity}. The histogram shows $\log$-distances. Hence, distance to the nearest region boundary varies over many orders of magnitude. This is consistent with Figures \ref{fig:2d_regions_init} and \ref{fig:1d_regions}, which lend credence to the intuition that small distances to the nearest region boundary are explained by the presence of many small regions. According to Theorem \ref{T:main}, this should correlate with a combination of regions in input space at which some neurons have a large gradient and neurons with highly peaked biases distributions. We hope to return to this in future work.

\subsection{Regions Within a 2D Plane}
We visualized the regions of a network through training. Specifically, following experiments in \citet{novak2018sensitivity}, we plotted regions within a plane in the $784$-dimensional input space (Figure \ref{fig:2d_regions}) through three data points with different labels ($0$, $1$, and $2$, in our case) inside a square centered at the circumcenter of the three examples. The network shown has depth $3$ and width $64$. We observe that, as expected from our other plots, the regions expand initially during training and then contract again. We expect the number of regions within a $2$-dimensional subspace to be on the order of the square of the number of neurons -- that is, $(64 \cdot 3)^2 \approx 4\times 10^4$, which we indeed find.

Our approach for calculating regions is simple. We start with a single region (in this case, the square), and subdivide it by adding neurons to the network one by one. For each new neuron, we calculate the linear function it defines on each region, and determine whether that region is split into two. This approach terminates within a reasonable amount of time precisely because our theorem holds: there are relatively few regions. Note that we \emph{exactly} determine all regions within the given square by calculating all region boundaries; thus our counts are exact and do not miss any small regions, as might occur if we merely estimated regions by sampling points from input space. 
\begin{figure*}[htb]
    \centering
    \includegraphics[width=0.9\textwidth]{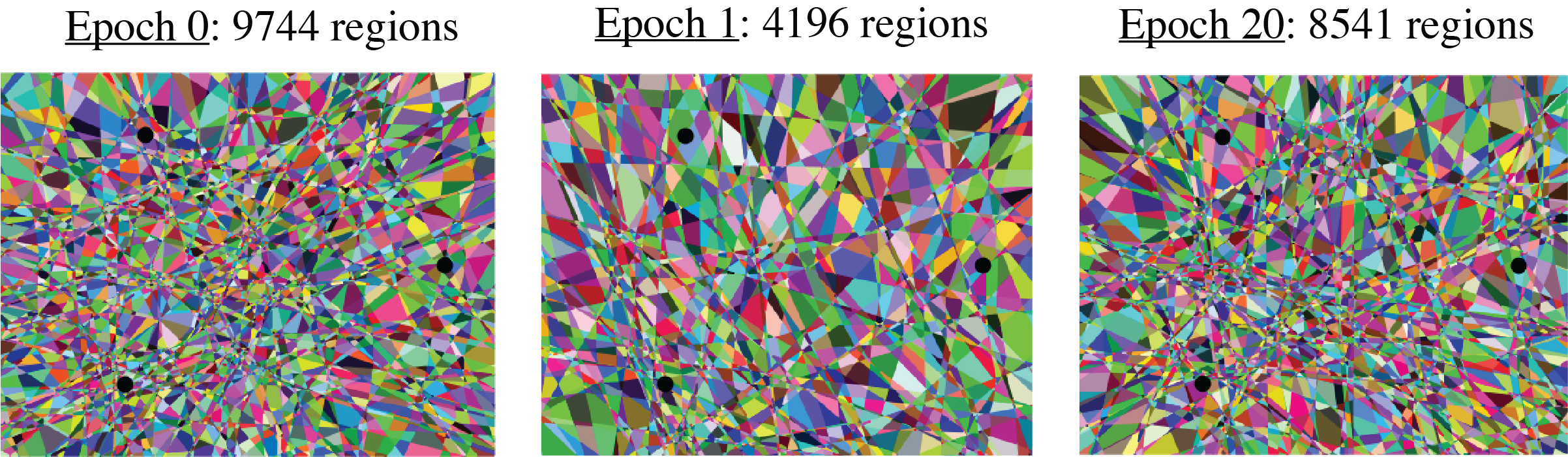}
    \caption{Here we show the linear regions that intersect a 2D plane through input space for a network of depth 3 and width 64 trained on MNIST. Black dots indicate the positions of the three MNIST training examples defining the plane. Note that we obtain qualitatively different pictures from \citet{novak2018sensitivity}, which may result partially from our using ReLU activation instead of ReLU6.}
    \label{fig:2d_regions}
\end{figure*}

\section{Related Work}\label{s:related-work} There are a number of works that touch on the themes of this article: (i) the expressivity of depth; (ii) counting the number of regions in networks with piecewise linear activations; (iii) the behavior of linear regions through training; and (iv) the difference between expressivity and learnability. Related to (i), we refer the reader to \citet{eldan2016power,telgarsky2016benefits} for examples of functions that can be efficiently represented by deep but not shallow ReLU nets. Next, still related to (i), for uniform approximation over classes of functions, again using deep ReLU nets, see \citet{yarotsky2017error, rolnick2017power, yarotsky2018optimal, petersen2018optimal}. For interesting results on (ii) about counting the maximal possible number of linear regions in networks with piecewise linear activations see \citet{bianchini2014complexity, montufar2014number, poole2016exponential, arora2016understanding, raghu2017expressive}. Next, in the vein of (iii), for both a theoretical and empirical perspective on the number of regions computed by deep networks and specifically how the regions change during training, see \citet{poole2016exponential, novak2018sensitivity}. In the direction of (iv), we refer the reader to \citet{shalev2017failures, hanin2018start, hanin2018neural}. Finally, for general insights into learnability and expressivity in deep vs.~shallow networks see \citet{ mhaskar2016deep, mhaskar2016learning,  zhang2016understanding, lin2017does, poggio2017and, neyshabur2017exploring}.

\section{Formal Statement of Results}\label{S:formal}
To state our results precisely, we fix some notation. Let $d,\nin, n_1,\ldots, n_{d}\geq 1$ and consider a depth $d$ fully connected $\Relu$ net $\mathcal N$ with input dimension $\nin$, output dimension $1$, and hidden layer widths $n_j,\,j=1,\ldots, d-1.$ As explained in the introduction, a generic configuration of its weights and biases partitions the input space $\R^{\nin}$ into a union of polytopes $P_j$ with disjoint interiors. Restricted to each $P_j,$ $\mathcal N$ computes a linear function. 

Our main mathematical result, Theorem \ref{T:main}, concerns the set $\mB_{\mN}$ of points $x\in \R^{\nin}$ at which the gradient $\nabla \mN$ is discontinuous at $x$ (see \eqref{E:B-def}). For each $k=1,\ldots, \nin,$ we define 
\begin{equation}\label{E:Bk-def}
\mB_{\mN,k}=\text{the }``(\nin-k)\text{--dimensional piece'' of }\mB_{\mN}.
\end{equation}
More precisely, we set $\mB_{\mN,0}:=\emptyset$ and recursively define $\mB_{\mN,k}$ to be the set of points $x\in \mB_{\mN}\backslash \set{\mB_{\mN,0}\cup\cdots\cup\mB_{\mN,k-1}}$ so that
in a neighborhood of $x$ the set $\mB_{\mN}\backslash \set{\mB_{\mN,0}\cup\cdots\cup\mB_{\mN,k-1}}$ coincides with a co-dimension $k$ hyperplane.

For example, when $\nin=2,$ the linear regions $P_j$ are polygons, the set $\mB_{\mN,1}$ is the union of the open line segments making up the boundaries of the $P_j$, and $\mB_{\mN,2}$ is the collection of vertices of the $P_j.$ Theorem \ref{T:main} provides a convenient formula for the average of the $(\nin-k)-$dimensional volume of $\mB_{\mN,k}$ inside any bounded, measurable set $K\subset \R^{n_\mathrm{in}}$. To state the result, for every neuron $z$ in $\mathcal N$ we will write
\begin{equation}\label{E:z-notation}
z(x)~:=~\text{pre-activation at }z,\quad \ell(z)~=~\text{layer index of }z 
\end{equation}
and $b_z~:=~\text{bias at }z.$ Thus, for a given input $x\in \R^{n_0}$, the post-activation of $z$ is 
\begin{equation}\label{E:Z-notation}
Z(x):=\Relu(z(x))=\max\set{0,z(x)-b_z}.
\end{equation}
Theorem \ref{T:main} holds under the following assumption on the distribution of weights and biases: 
\begin{enumerate}
\item[\textbf{A1:}] The conditional distribution of any collection of biases $b_{z_1},\ldots, b_{z_k}$, given all the other weights and biases, has a density $\rho_{b_{z_1},\ldots, b_{z_k}}(b_1,\ldots, b_k)$ with respect to Lebesgue measure on $\R^{k}$.
\item[\textbf{A2:}] The joint distribution of all the weights has a density with respect to Lebesgue measure on $\R^{\#\text{weights}}$.
\end{enumerate}
These assumptions hold in particular when the weights and biases of $\mN$ are independent with marginal distributions that have a density relative to Lebesgue measure on $\R$ (i.e. at initialization). They hold much more generally, however, and can intuitively be viewed as a non-degeneracy assumption on the behavior of the weights and biases of $\mN$. Specifically, they are used in Proposition \ref{P:k-dim-pts} to ensure that the set $\mB_{\mN, k}$ consists of inputs where exactly $k$ neurons turn off/on. Assumption (A1) also allows us, in Proposition \ref{P:vol-rep}, to apply the co-area formula \eqref{E:co-area} to compute the expect volume of the set of inputs where a given collection of neurons turn on/off. Our main result is the following.

\begin{theorem}\label{T:main}
Suppose $\mathcal N$ is a feed-forward $\Relu$ net with input dimension $n_0,$ output dimension $1$, and random weights/biases. Assume that the distribution of weights/biases satisfies Assumptions $A1$ and $A2$ above. Then, with the notation \eqref{E:z-notation}, for any bounded measurable set $K\subseteq \R^{n_{\mathrm{in}}}$ and any $k=1,\ldots, n_{\mathrm{in}},$ the average $(n_{\mathrm{in}}-k)-$ dimensional volume $\E{\vol_{n_{\mathrm{in}}-k}(\mB_{\mN,k}\cap K)}$ of $\mB_{\mN,k}$ inside $K$ is
 \begin{equation}\label{E:vol-est-gen}
 \E{\vol_{n_{\mathrm{in}}-k}(\mB_{\mN,k}\cap K)}
 \end{equation}
of $\mB_{\mN,k}$ inside $K$ is, in the notation \eqref{E:z-notation}, 
\[\sum_{\substack{\mathrm{distinct~neurons~} \\z_1,\ldots, z_k\mathrm{~in~}\mN}}~\int_K \mathbb E\big[Y_{z_1,\ldots, z_k}(x)\big]dx,\]
where $Y_{z_1,\ldots, z_k}(x)$ is
\[\norm{J_{z_1,\ldots, z_k}(x)} \,\rho_{b_{z_1},\ldots, b_{z_k}}(z_1(x),\ldots, z_k(x))\]
times the indicator function of the event that $z_j\text{ is good at }x$ for each $j=1,\ldots, k$. Here, $J_{z_1,\ldots, z_k}$ is the $k\x \nin$ Jacobian of the map $x\mapsto (z_1(x),\ldots,z_k(x)),$ 
\[\norm{J_{z_1,\ldots, z_k}(x)}:=\det\lr{J_{z_1,\ldots, z_k}(x)\lr{J_{z_1,\ldots, z_k}(x)}^T}^{1/2},\]
the function $\rho_{b_{z_1},\ldots, b_{z_k}}$ is the density of the joint distribution of the biases $b_{z_1},\ldots, b_{z_k}$, and we say a neuron $z$ is good at $x$ if there exists a path of neurons from $z$ to the output in the computational graph of $\mN$ so that each neuron along this path is open at $x$).
\end{theorem}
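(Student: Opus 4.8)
The plan is to obtain \eqref{E:vol-est-gen} from two ingredients already isolated in the surrounding discussion: the stratification of $\mB_{\mN,k}$ furnished by Proposition \ref{P:k-dim-pts}, and the integral-geometric volume identity of Proposition \ref{P:vol-rep}. The overall scheme is to write the codimension-$k$ boundary as a disjoint union of ``switching sets,'' one for each $k$-element collection of neurons, compute the expected volume of a single switching set by the co-area formula, and sum.

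First I would invoke Proposition \ref{P:k-dim-pts} to write, up to an $\hcal^{n_{\mathrm{in}}-k}$-null set,
\[\mB_{\mN,k}=\bigsqcup_{\set{z_1,\ldots,z_k}}S_{z_1,\ldots,z_k},\qquad S_{z_1,\ldots,z_k}:=\set{x~:~z_j(x)=b_{z_j}\text{ and }z_j\text{ is good at }x\text{ for }j=1,\ldots,k},\]
the union being over unordered sets of $k$ distinct neurons. The substance of this step is that $\nabla\mN$ is discontinuous at $x$ precisely when some good neuron is at threshold there (a dead neuron's switch does not propagate to the output), and that Assumptions A1 and A2 force the switching hypersurfaces to meet transversally, so that the codimension-$k$ stratum is exactly where $k$ good neurons simultaneously satisfy $z_j(x)=b_{z_j}$; points at which more than $k$ neurons coincide fall into higher-codimension strata and carry no $\hcal^{n_{\mathrm{in}}-k}$-mass. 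Countable additivity of Hausdorff measure then gives $\vol_{n_{\mathrm{in}}-k}(\mB_{\mN,k}\cap K)=\sum_{\set{z_1,\ldots,z_k}}\vol_{n_{\mathrm{in}}-k}(S_{z_1,\ldots,z_k}\cap K)$, and after taking expectations the theorem reduces to evaluating $\E{\vol_{n_{\mathrm{in}}-k}(S_{z_1,\ldots,z_k}\cap K)}$ for one fixed collection.

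Next I would compute that single-collection expectation via Proposition \ref{P:vol-rep}, whose engine is the co-area formula \eqref{E:co-area}. Condition on all weights and on every bias except $b_{z_1},\ldots,b_{z_k}$, so that $\Phi(x):=(z_1(x),\ldots,z_k(x))$ becomes a fixed Lipschitz map $\R^{n_{\mathrm{in}}}\to\R^k$ and $S_{z_1,\ldots,z_k}$ is the fiber $\Phi^{-1}(b_{z_1},\ldots,b_{z_k})$ intersected with the good region. Averaging the fiber volume against the bias density and using that along the fiber $\rho_{b_{z_1},\ldots,b_{z_k}}$ evaluated at the target equals $\rho_{b_{z_1},\ldots,b_{z_k}}(z_1(x),\ldots,z_k(x))$, the co-area formula turns the fiber integral into $\int_K \mathbf 1[\text{each }z_j\text{ good at }x]\,\norm{J_{z_1,\ldots,z_k}(x)}\,\rho_{b_{z_1},\ldots,b_{z_k}}(z_1(x),\ldots,z_k(x))\,dx=\int_K Y_{z_1,\ldots,z_k}(x)\,dx$. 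Taking the remaining expectation over the weights and non-tuple biases, and then summing over all $k$-element neuron sets, produces exactly \eqref{E:vol-est-gen}.

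The main obstacle is making the co-area step legitimate in the presence of the network's feed-forward coupling. Two features obstruct a naive application: $\Phi$ is merely piecewise linear, with its Jacobian jumping across the switching surfaces of all the \emph{other} neurons, and the constant term of a later-layer pre-activation depends on the biases $b_{z_i}$ of earlier collection neurons, so conditioning on ``everything but the collection biases'' does not literally freeze $\Phi$. I would resolve this by localizing to the relatively open pieces of $\mB_{\mN,k}$ on which the activation pattern of every non-collection neuron is constant; on each such piece the $z_j$ are affine, and crucially the linear part $\nabla z_j$ -- hence the Gram factor $\norm{J_{z_1,\ldots,z_k}(x)}$ -- depends only on the weights and the pattern and not on any bias, so integrating the collection biases against $\rho_{b_{z_1},\ldots,b_{z_k}}$ reduces to co-area for a genuinely fixed affine map, with the residual triangular dependence of the constants on the earlier biases being a unit-Jacobian change of variables that the joint density absorbs. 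I expect the most delicate part to be the measure-theoretic bookkeeping -- verifying that the exceptional sets (where $\Phi$ loses rank, where more than $k$ neurons coincide, or where the good region is entered or exited) are $\hcal^{n_{\mathrm{in}}-k}$-null -- which is precisely where the non-degeneracy Assumptions A1 and A2 are consumed.
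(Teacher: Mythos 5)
Your proposal reproduces the paper's own argument: the paper likewise decomposes $\mB_{\mN,k}$ into the per-tuple switching sets $\twiddle{S}_{z_1,\ldots,z_k}$ (Propositions \ref{P:B-decomp} and \ref{P:k-dim-pts}), notes that intersections involving more than $k$ neurons carry no $(\nin-k)$-dimensional volume, and evaluates $\E{\vol_{\nin-k}(\twiddle{S}_{z_1,\ldots,z_k}\cap K)}$ by conditioning on all remaining weights and biases and applying the co-area formula (Proposition \ref{P:vol-rep}), before summing over tuples exactly as you do. If anything, your handling of the feed-forward coupling --- localizing to constant activation patterns of the non-collection neurons and absorbing the triangular dependence of later pre-activations on earlier collection biases via a unit-Jacobian change of variables --- is more careful than the paper's proof of Proposition \ref{P:vol-rep}, which applies the co-area formula while treating $x\mapsto(z_1(x),\ldots,z_k(x))$ as a fixed map under that conditioning.
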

 To evaluate the expression in \eqref{E:vol-est-gen} requires information on the distribution of gradients $\nabla z(x)$, the pre-activations $z(x)$, and the biases $b_z.$ Exact information about these quantities is available at initialization \citep{hanin2018neural,hanin2018start,hanin2018products}, yielding the following Corollary.
 
\begin{corollary}\label{C:init-formal-intro}
   With the notation and assumptions of Theorem \ref{T:main}, suppose the weights are independent are drawn from a fixed probability measure $\mu$ on $\R$ that is symmetric around $0$ and then rescaled to have $\Var[\mathrm{weights}] \,=\, 2/\text{fan-in}$. Fix $k\in \set{1,\ldots,n_{\mathrm{in}}}$. Then there exists $C>0$ for which
 \begin{align}\label{E:UB-init-intro}
&\frac{\E{\vol_{n_{\mathrm{in}}-k}(\mB_{\mN,k}\cap K)}}{\vol_{n_{\mathrm{in}}}(K)}\\
\notag&\quad\leq~ \binom{\#\set{\mathrm{neurons}}}{k} (C_{\mathrm{grad}}\cdot C_{\mathrm{bias}})^k,
\end{align}
where
\[C_{\mathrm{bias}}=\sup_z\sup_{b\in \R} \rho_{b_z}(b)\]
and
\[C_{\mathrm{grad}}=\sup_{z}\sup_{x\in\R^{\nin}} \E{\norm{\nabla z(x)}^{2k}}^{1/k}~\leq~Ce^{C\sum_{j=1}^d\frac{1}{n_j}}\]
where $C>0$ depends only on $\mu$ but not on the architecture of $\mN$ and $n_j$ is the width of the $j^{th}$ hidden layer. Moreover, we also have similar lower bounds
 \begin{equation}\label{E:LB-init}
\binom{\#\set{\mathrm{neurons}}}{k} c_{\mathrm{bias}}^k~\leq~ \frac{\E{\vol_{n_{\mathrm{in}}-k}(\mB_{\mN,k}\cap K)}}{\vol_{n_{\mathrm{in}}}(K)}
\end{equation}
where
\[c_{\mathrm{bias}}~=~\inf_{\abs{b}\leq\eta} \rho_{b_z}(b),\]
and
\[\eta =\lr{\frac{\sup_{x\in K}\norm{x}^2}{n_{\mathrm{in}}}+\sum_{j=1}^d \sigma_{b_j}^2} e^{C'\sum_{j=1}^d\frac{1}{n_j}},\]
with $C'>0$ depending only on the distribution $\mu$ of the weights in $\mN$.
\end{corollary}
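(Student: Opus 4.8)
The plan is to derive both \eqref{E:UB-init-intro} and \eqref{E:LB-init} directly from the exact formula of Theorem \ref{T:main}, which writes $\E{\vol_{n_{\mathrm{in}}-k}(\mB_{\mN,k}\cap K)}$ as a sum, over the $\binom{\#\set{\mathrm{neurons}}}{k}$ sets of $k$ distinct neurons $z_1,\ldots,z_k$, of the integrals $\int_K\E{Y_{z_1,\ldots,z_k}(x)}\,dx$. The binomial coefficient in each bound is just the number of summands, so it suffices to bound one term $\int_K\E{Y_{z_1,\ldots,z_k}(x)}\,dx$ above and below by $\vol_{n_{\mathrm{in}}}(K)$ times the stated constant.

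For the upper bound \eqref{E:UB-init-intro} I would first replace the indicator that each $z_j$ is good at $x$ by $1$. Since at initialization the biases are mutually independent, the joint density $\rho_{b_{z_1},\ldots,b_{z_k}}$ factors into marginals, each bounded pointwise by $C_{\mathrm{bias}}$, so that $\rho_{b_{z_1},\ldots,b_{z_k}}(z_1(x),\ldots,z_k(x))\le C_{\mathrm{bias}}^k$ deterministically. For the Jacobian factor I would invoke Hadamard's inequality applied to the positive semidefinite Gram matrix $J_{z_1,\ldots,z_k}(x)\,J_{z_1,\ldots,z_k}(x)^T$, which gives $\norm{J_{z_1,\ldots,z_k}(x)}\le\prod_{j=1}^k\norm{\nabla z_j(x)}$. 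One application of Hölder's inequality (with $k$ equal exponents) yields $\E{\prod_j\norm{\nabla z_j(x)}}\le\prod_j\E{\norm{\nabla z_j(x)}^k}^{1/k}$, and the power-mean inequality $\E{X^k}^{1/k}\le\E{X^{2k}}^{1/(2k)}$ bounds each factor by $C_{\mathrm{grad}}^{1/2}$; using $C_{\mathrm{grad}}\ge 1$ this is at most $C_{\mathrm{grad}}^{\,k}$ after collecting factors. Dividing by $\vol_{n_{\mathrm{in}}}(K)$ and summing over the $\binom{\#\set{\mathrm{neurons}}}{k}$ terms produces \eqref{E:UB-init-intro}. The estimate $C_{\mathrm{grad}}\le Ce^{C\sum_j 1/n_j}$ would not be reproved here but quoted from the exact finite-width moment bounds for $\Relu$ gradients at He initialization in \citep{hanin2018neural,hanin2018start,hanin2018products}.

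The lower bound \eqref{E:LB-init} is the more delicate half, and I expect it to be the main obstacle. Now the good-neuron indicator cannot be discarded, and the Gram determinant admits no useful deterministic lower bound, since it vanishes when the gradients $\nabla z_j(x)$ become collinear. My plan is to bound the bias density from below: on the event $\set{\abs{z_j(x)}\le\eta,\ j=1,\ldots,k}$, independence and the definition of $c_{\mathrm{bias}}$ give $\rho_{b_{z_1},\ldots,b_{z_k}}(z_1(x),\ldots,z_k(x))\ge c_{\mathrm{bias}}^k$. The threshold $\eta$ is chosen precisely as the second-moment scale of the pre-activations: the quantity $\frac{\sup_{x\in K}\norm{x}^2}{n_{\mathrm{in}}}+\sum_j\sigma_{b_j}^2$ is the leading-order value of $\E{z(x)^2}$ under He initialization, and the factor $e^{C'\sum_j 1/n_j}$ is the finite-width correction from \citep{hanin2018neural,hanin2018start,hanin2018products}, so that Chebyshev's inequality keeps a definite fraction of the mass inside this event. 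It then remains to show that the surviving contribution $\int_K\E{\norm{J_{z_1,\ldots,z_k}(x)}\,\mathbf 1[\text{all }z_j\text{ good at }x]\,\mathbf 1[\abs{z_j(x)}\le\eta]}\,dx$, divided by $\vol_{n_{\mathrm{in}}}(K)$, is at least $1$.

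The hard step is this final lower bound on the joint Jacobian and good-event contribution. I would attack it through the co-area representation behind Proposition \ref{P:vol-rep} (the formula \eqref{E:co-area}), which recasts $\int_K\norm{J_{z_1,\ldots,z_k}(x)}\,g(z(x))\,dx$ as an integral of genuine $(n_{\mathrm{in}}-k)$-dimensional level-set volumes against the bias density, trading the degenerate determinant for manifestly nonnegative geometric quantities. These can then be controlled from below using the exact criticality structure of the variance-$2/\text{fan-in}$ initialization, in which the factor $2$ in the weight variance exactly balances the probability $1/2$ that a $\Relu$ neuron is open; this is the same balancing that makes the relevant second moments and open-path probabilities bounded away from $0$. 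The crux, and where Assumptions A1--A2 and the symmetric, critically scaled weights are essential, is controlling the \emph{simultaneous} occurrence of the good event and the small-pre-activation event $\set{\abs{z_j(x)}\le\eta}$, two correlated conditions on the same random network, so that their joint contribution survives after restricting the bias range.
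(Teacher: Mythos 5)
Your upper bound argument is essentially the paper's: factor the independent bias density to pull out $C_{\mathrm{bias}}^k$, bound the Gram determinant by $\prod_{j=1}^k\norm{\nabla z_j(x)}$, and control the product of gradient norms by moment estimates quoted from \citet{hanin2018products}; your H\"older-plus-power-mean variant in place of the paper's AM--GM-plus-Jensen step is a harmless substitution. The gap is in the lower bound, and it sits exactly where you predicted. Your reduction is correct as far as it goes: restrict to the event $\set{\abs{z_j(x)}\le\eta,\ j=1,\ldots,k}$, pull out $c_{\mathrm{bias}}^k$, and you are left needing a lower bound on $\E{\norm{J_{z_1,\ldots,z_k}(x)}\,{\bf 1}[\text{all }z_j\text{ good}]\,{\bf 1}[\abs{z_j(x)}\le\eta]}$. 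But your proposed attack on this quantity --- running the co-area formula backwards to convert it into level-set volumes and then invoking the ``criticality structure'' of the initialization --- is circular: the co-area formula is precisely the device that converts the level-set volume $\E{\vol_{\nin-k}\lr{\twiddle{S}_{z_1,\ldots,z_k}\cap K}}$ into this integrand (Proposition \ref{P:vol-rep}), so undoing it returns you to the very quantity the corollary is estimating, with no mechanism for ruling out the degenerate configurations (collinear gradients, good event failing exactly where pre-activations are small) that you yourself flag as the crux.

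The paper never lower bounds a joint $k$-neuron Gram determinant at all; it inducts on $k$. With probability $1$ the set $S_{z_1}$ is a piecewise linear hypersurface with finitely many linear pieces $P_\alpha$; restricting $\mN$ to each $P_\alpha$ produces a network $\mN_\alpha$ of input dimension $\nin-1$ whose weights and biases still satisfy the hypotheses, and $\vol_{\nin-k}\lr{\twiddle{S}_{z_1,\ldots,z_k}\cap K}=\sum_\alpha\vol_{\nin-k}\lr{\twiddle{S}_{z_2,\ldots,z_k}\cap P_\alpha\cap K}$, so the inductive hypothesis for $k-1$ neurons applies piece by piece and everything collapses to the base case $k=1$. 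That base case is then handled by your kind of argument, but it requires two inputs your plan lacks: (i) Proposition 2 of \citet{hanin2018products}, which says that at this initialization the on/off indicators $\set{{\bf 1}_{\set{z(x)>b_z}}}$ are independent Bernoulli$(1/2)$ variables, so the good event decouples from $(z(x),b_z)$ and has probability at least $1-\sum_{j}2^{-n_j}$ --- this is what resolves the ``simultaneous occurrence'' problem you identified; and (ii) a Cauchy--Schwarz splitting $\E{\norm{\nabla z(x)}\,{\bf 1}_{\set{\abs{z(x)}\le\eta}}}\ge\E{\norm{\nabla z(x)}}-\lr{\E{\norm{\nabla z(x)}^2}\,\P\lr{\abs{z(x)}>\eta}}^{1/2}$, combined with the identities $\E{\norm{\nabla z(x)}^2}=2$, the exact formula for $\E{z(x)^2}$, and the log-gradient estimate of \citet{hanin2018products} to lower bound $\E{\norm{\nabla z(x)}}$. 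Note that your appeal to Chebyshev to keep ``a definite fraction of the mass'' in the event $\set{\abs{z(x)}\le\eta}$ is not enough even at $k=1$: $\norm{\nabla z(x)}$ and $z(x)$ are correlated, so a probability bound alone does not lower bound the restricted expectation --- the same correlation issue you raised, already present in the base case. Without the inductive slicing and the decoupling fact, the plan does not close.
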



We prove Corollary \ref{C:init-formal} in Appendix \ref{S:init-pf}. Let us state one final corollary of Theorem \ref{T:main}
\begin{corollary}\label{C:dist-formal-intro}
Suppose $\mN$ is as in Theorem \ref{T:main} and satisfies the hypothesis \eqref{E:bounded-grad} in Corollary \ref{C:init-formal}. Then, for any compact set $K\subset \R^{n_{\mathrm{in}}}$ let $x$ be a uniform point in $K.$ There exists $c>0$ independent of $K$ so that
\[\E{\mathrm{distance(x,\mB_{\mN})}}\geq \frac{c~ }{C_{\mathrm{bias}}C_{\mathrm{grad}}\#\set{\mathrm{neurons}}}.\]
\end{corollary}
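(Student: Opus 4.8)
The plan is to combine an exact pointwise lower bound for $\mathrm{distance}(x,\mB_{\mN})$ with a first-moment (union-bound) argument over neurons, using Assumption A1 to control each bias and the gradient hypothesis to control each $\norm{\nabla z(x)}$. First I would establish the deterministic inequality
\[\mathrm{distance}(x,\mB_{\mN})\;\geq\;\min_{\text{neurons }z}\frac{\abs{z(x)-b_z}}{\norm{\nabla z(x)}}\;=:\;D.\]
To see this, let $P$ be the linear region of $\mN$ containing $x$. Since $\mB_{\mN}\subseteq \bigcup_j \partial P_j$ and $x$ lies in the interior of $P$, we have $\mathrm{distance}(x,\mB_{\mN})\geq \mathrm{distance}(x,\partial P)$. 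Each face of the polytope $P$ lies on a switching set $\set{y:z(y)=b_z}$ for some neuron $z$, and on $P$ the pre-activation $z$ equals the affine function $y\mapsto z(x)+\inprod{\nabla z(x)}{y-x}$, so the hyperplane carrying that face sits at Euclidean distance exactly $\abs{z(x)-b_z}/\norm{\nabla z(x)}$ from $x$. Minimizing over the finitely many faces, and then relaxing the minimum to run over all neurons (which only decreases $D$), yields the bound.

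Next I would pass to expectations via the layer-cake identity $\E{D}=\int_0^\infty \P(D>t)\,dt$, so it suffices to control $\P(D\leq t)$ for small $t$. A union bound gives
\[\P(D\leq t)\;\leq\;\sum_z \P\big(\abs{z(x)-b_z}\leq t\,\norm{\nabla z(x)}\big).\]
For a fixed neuron $z$, I would condition on $x$ and on all weights and biases other than $b_z$; crucially, both the pre-activation $z(x)$ and the gradient $\nabla z(x)$ are determined by the remaining parameters and do not depend on $b_z$. By Assumption A1 the conditional density of $b_z$ is bounded by $C_{\mathrm{bias}}$, and the event forces $b_z$ into an interval of length $2t\,\norm{\nabla z(x)}$; hence the conditional probability is at most $2t\,C_{\mathrm{bias}}\norm{\nabla z(x)}$. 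Taking expectations and summing, then invoking the gradient hypothesis in the form $\sup_z\sup_x\E{\norm{\nabla z(x)}}\leq C_{\mathrm{grad}}$ (via Jensen, since $\E{\norm{\nabla z(x)}}\leq \E{\norm{\nabla z(x)}^2}^{1/2}$), gives
\[\P(D\leq t)\;\leq\;2t\,C_{\mathrm{bias}}\sum_z \E{\norm{\nabla z(x)}}\;\leq\;2t\,C_{\mathrm{bias}}\,C_{\mathrm{grad}}\,\#\set{\mathrm{neurons}}.\]

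To conclude, set $T=\lr{4\,C_{\mathrm{bias}}\,C_{\mathrm{grad}}\,\#\set{\mathrm{neurons}}}^{-1}$, so that $\P(D\leq t)\leq 1/2$ for all $t\leq T$. Then
\[\E{\mathrm{distance}(x,\mB_{\mN})}\;\geq\;\E{D}\;\geq\;\int_0^T \P(D>t)\,dt\;\geq\;\frac{T}{2}\;=\;\frac{1}{8\,C_{\mathrm{bias}}\,C_{\mathrm{grad}}\,\#\set{\mathrm{neurons}}},\]
which is the claim with $c=1/8$. I expect the probabilistic steps to be routine once A1 is in hand; the main obstacle is the deterministic geometric reduction in the first step, namely verifying carefully that $\mB_{\mN}$ is contained in the union of the region boundaries and that every boundary face of $P$ is a genuine piece of a \emph{linearized} neuron hyperplane at the stated distance. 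One must check that the merely piecewise-linear (rather than globally affine) nature of each map $x\mapsto z(x)$ cannot place the nearest boundary point closer than $D$, which is exactly why restricting attention to the single region $P$ containing $x$ — where all pre-activations are simultaneously affine — is essential.
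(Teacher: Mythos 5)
Your probabilistic half is correct, and it follows a genuinely different route from the paper's. The paper proves this corollary (\S\ref{S:dist-pf}) by writing $\E{\mathrm{distance}(x,\mB_{\mN})}\geq\ep\,\P\lr{\mathrm{distance}(x,\mB_{\mN})>\ep}$, identifying $\P\lr{\mathrm{distance}(x,\mB_{\mN})\leq\ep}$ with the expected volume of the $\ep$-tube around $\mB_{\mN}$ in $K$, bounding that tube volume stratum by stratum via Lemma \ref{L:tube-vol}, and then invoking the upper bounds of Corollary \ref{C:init-formal} (hence Theorem \ref{T:main}) on every codimension-$k$ piece $\mB_{\mN,k}$. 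Your union bound over neurons---conditioning on everything except $b_z$, noting correctly that $z(x)$ and $\nabla z(x)$ do not depend on $b_z$, and integrating the bias density over an interval of length $2t\norm{\nabla z(x)}$---bypasses Theorem \ref{T:main}, the tube-volume lemma, and the higher-codimension strata entirely, and yields an explicit constant. This is a legitimately more elementary argument, provided the pointwise inequality $\mathrm{distance}(x,\mB_{\mN})\geq D:=\min_z\abs{z(x)-b_z}/\norm{\nabla z(x)}$ is established.

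That deterministic step is where you have a genuine gap: the inequality is true, but your justification of it is wrong. You take $P$ to be the linear region of $\mN$ containing $x$ and assert that on $P$ every pre-activation is affine (your closing paragraph insists on exactly this point). By Proposition \ref{P:B-decomp}, $\mB_{\mN}=\bigcup_z\twiddle{S}_z$, which is in general a \emph{strict} subset of $\bigcup_z S_z$: a neuron can cross its breakpoint at an input where it is dead (not good), and such inputs lie in the \emph{interior} of a linear region of $\mN$, where downstream pre-activations therefore have kinks. Consequently $P$ need not be convex, need not be an activation cell, and a face of $P$ carried by a deeper neuron's switching set $\set{y\,:\,z(y)=b_z}$ can bend inside $P$ and sit much closer to $x$ than $\abs{z(x)-b_z}/\norm{\nabla z(x)}$. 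Concretely, with $\nin=1$: take $z_1(x)=x,\ b_{z_1}=0$, $z_0(x)=x,\ b_{z_0}=-10$, a second-layer pre-activation $z_2=M\,\Relu(z_1-b_{z_1})+0.1\,\Relu(z_0-b_{z_0})$ with $b_{z_2}=1.1$, and output $\Relu(z_2-b_{z_2})$. For large $M$ and $x_0=-0.1$, the linear region containing $x_0$ is $(-\infty,\,0.1/(M+0.1))$; it contains the kink of $z_2$ at $0$, its right endpoint lies at distance about $0.1$ from $x_0$, yet your formula for $z_2$ gives $1.1$; the minimum $D$ remains a valid lower bound only because the \emph{dead} neuron $z_1$ contributes the ratio $0.1$. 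The repair is short and keeps your architecture intact: since $\mB_{\mN}\subseteq\bigcup_z S_z$, it suffices to bound $\mathrm{distance}(x,\bigcup_z S_z)$. Let $y^*$ be the closest point of $\bigcup_z S_z$ to $x$; then no neuron crosses a breakpoint on $[x,y^*)$, so by induction on layers every pre-activation is affine on the segment $[x,y^*]$; writing $y^*\in S_{z^*}$ gives $\abs{z^*(x)-b_{z^*}}=\abs{\inprod{\nabla z^*(x)}{y^*-x}}\leq\norm{\nabla z^*(x)}\,\norm{y^*-x}$, i.e.\ $D\leq\norm{y^*-x}\leq\mathrm{distance}(x,\mB_{\mN})$. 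With that substitution the rest of your proof goes through as written (for general piecewise linear $\phi$, the minimum must also run over the $T$ breakpoints, costing a factor $T$ in the union bound).
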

We prove Corollary \ref{C:dist-formal} in \S \ref{S:dist-pf}. The basic idea is simple. For every $\epsilon>0,$ we have
\[\E{\mathrm{distance(x,\mB_{\mN})}}\geq \epsilon\P\lr{\mathrm{distance}(x,\mB_{\mN})>\epsilon},\]
with the probability on the right hand side scaling like \[1-\vol_{n_{\mathrm{in}}}(T_\epsilon(\mB_{\mN})\cap K)/\vol_{n_{\mathrm{in}}}(K),\]
where $T_\epsilon(\mB_{\mN})$ is the tube of radius $\epsilon$ around $\mB_{\mN}.$ We expect that its volume like $\epsilon \vol_{n_{\mathrm{in}}-1}(\mB_{\mN})$. Taking $\ep=c/\#\set{\mathrm{neurons}}$ yields the conclusion of Corollary \ref{C:dist-formal}.

\section{Conclusions and Further Work}

The question of why depth is powerful has been a persistent problem for deep learning theory, and one that recently has been answered by works giving enhanced expressivity as the ultimate explanation. However, our results suggest that such explanations may be misleading. While we do not speak to all notions of expressivity in this paper, we have both theoretically and empirically evaluated one common measure: the linear regions in the partition of input space defined by a network with piecewise linear activations. We found that the average size of the boundary of these linear regions depends only on the number of neurons and not on the network depth -- both at initialization and during training. This strongly suggests that deeper networks do not learn more complex functions than shallow networks. We plan to test this interpretation further in future work -- for example, with experiments on more complex tasks, as well as by investigating higher order statistics, such as the variance.

We do not propose a replacement theory for the success of deep learning; however, prior work has already hinted at how such a theory might proceed. Notably, \citet{ba2014deep} show that, once deep networks are trained to perform a task successfully, their behavior can often be replicated by shallow networks, suggesting that the advantages of depth may be linked to easier learning.



\bibliographystyle{icml2019}
\bibliography{references.bib}

\newpage
\appendix

\section{Formal Statement of Results for General Piecewise Linear Activations}\label{S:formal-general}
In \S\ref{S:formal}, we stated our results in the case of ReLU activation, and now frame these results for a general piecewise linear non-linearity. We fix some notation. Let $\phi:\R\gives \R$ be a continuous piecewise linear function with $T$ breakpoints $\xi_0=-\infty< \xi_1<\xi_2<\cdots< \xi_T<\xi_{T+1}=\infty.$ That is, there exist $p_j,q_j\in \R$ so that
\begin{equation}\label{E:phi-def}
t\in [\xi_j,\xi_{j+1}]\quad \Rightarrow\quad \phi(t)=q_j t + p_j,\; q_{j}\neq q_{j+1}.
\end{equation}
The analog of Theorem \ref{T:main} for general $\phi$ is the following.
\begin{theorem}\label{T:main-general}
Let $\phi:\R\gives \R$ be a continuous piecewise linear function with $T$ breakpoints $\xi_1<\cdots<\xi_T$ as in \eqref{E:phi-def}. Suppose $\mN$ is a fully connected network with input dimension $\nin,$ output dimension $1$, random weights and biases satisfying $A1$ and $A2$ above, and non-linearity $\phi$.

Let $J_{z_1,\ldots, z_k}$ be the $k\x \nin$ Jacobian of the map $x\mapsto (z_1(x),\ldots,z_k(x)),$ 
\[\norm{J_{z_1,\ldots, z_k}(x)}:=\det\lr{J_{z_1,\ldots, z_k}(x)\lr{J_{z_1,\ldots, z_k}(x)}^T}^{1/2},\]
and write $\rho_{b_{z_1},\ldots, b_{z_k}}$ for the density of the joint distribution of the biases $b_{z_1},\ldots, b_{z_k}$. We say a neuron $z$ is \emph{good} at $x$ if there exists a path of neurons from $z$ to the output in the computational graph of $\mN$ so that each neuron $\widehat{z}$ along this path is open at $x$ (i.e.~$\phi'(\widehat{z}(x)-b_{\widehat{z}})\neq 0$).

Then, for any bounded, measurable set $K\subseteq \R^{\nin}$ and any $k=1,\ldots, \nin,$ the average $(\nin-k)$--dimensional volume 
\[\E{\vol_{\nin-k}(\mB_{\mN,k}\cap K)}\] 
of $\mB_{\mN,k}$ inside $K$ is, in the notation of \eqref{E:z-notation}, 
 \begin{equation}\label{E:vol-est}
\sum_{\substack{\mathrm{distinct~neurons~} \\z_1,\ldots, z_k\mathrm{~in~}\mN}}\sum_{i_1,\ldots, i_k=1}^T~\int_K \mathbb E\big[Y_{z_1,\ldots, z_k}^{(\xi_{i_1},\ldots, \xi_{i_k})}(x)\big]dx,
\end{equation}
where $Y_{z_1,\ldots, z_k}^{(\xi_{i_1},\ldots, \xi_{i_k})}(x)$ equals
\begin{equation}\label{E:Y-def}
\norm{J_{z_1,\ldots, z_k}(x)} \,\rho_{b_{z_1},\ldots, b_{z_k}}(z_1(x)-\xi_{i_1},\ldots, z_k(x)-\xi_{i_k})
\end{equation}
multiplied by the indicator function of the event that $z_j$ is good at $x$ for every $j.$
\end{theorem}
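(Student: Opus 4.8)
The plan is to reduce the statement to a per-tuple computation and then evaluate each term with the coarea formula, exactly paralleling the ReLU case of Theorem \ref{T:main}; the only genuinely new ingredient for general $\phi$ is the bookkeeping over the $T$ breakpoints. First I would pin down the geometry of $\mB_{\mN}$. The gradient $\nabla\mN$ is piecewise constant and changes exactly when the open/closed status of some neuron flips, i.e.\ when a pre-activation $z(x)-b_z$ crosses one of the breakpoints $\xi_1,\dots,\xi_T$. Such a crossing alters $\nabla\mN$ precisely when the slope change of $\phi$ at $z$ propagates to the output, which is the content of the \emph{good neuron} condition. Under A2 the weights have a density, so the propagated slope change vanishes only on a Lebesgue-null set, and one checks that, up to null sets, $\mB_{\mN}=\set{x:\ \exists\text{ good } z,\ \exists\, i,\ z(x)-b_z=\xi_i}$, identifying the boundary with a union of level sets of the pre-activation maps.

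Next I would set up the stratification. For distinct neurons $z_1,\dots,z_k$ and breakpoint indices $i_1,\dots,i_k$, define the crossing set
\[S_{\vec z,\vec{\imath}}=\set{x\in K:\ z_j(x)-b_{z_j}=\xi_{i_j}\ \text{and}\ z_j\ \text{good at}\ x,\ j=1,\dots,k}.\]
By Proposition \ref{P:k-dim-pts}, Assumptions A1--A2 guarantee that on $\mB_{\mN,k}$ exactly $k$ good neurons sit at breakpoints, and that each $S_{\vec z,\vec{\imath}}$ is almost surely a codimension-$k$ piecewise linear set, while configurations with more than $k$ neurons simultaneously at breakpoints form a set of dimension $<\nin-k$. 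Hence, up to $\mathcal H^{\nin-k}$-null sets, $\mB_{\mN,k}\cap K$ is the essentially disjoint union of the $S_{\vec z,\vec{\imath}}$, so
\[\vol_{\nin-k}(\mB_{\mN,k}\cap K)=\sum_{\vec z}\ \sum_{i_1,\dots,i_k=1}^{T}\vol_{\nin-k}(S_{\vec z,\vec{\imath}}),\]
which reduces the theorem to computing $\E{\vol_{\nin-k}(S_{\vec z,\vec{\imath}})}$ term by term.

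For a fixed tuple I would condition on all weights and on every bias except $b_{z_1},\dots,b_{z_k}$; under A1 these $k$ biases then carry the conditional joint density $\rho_{b_{z_1},\dots,b_{z_k}}$. Writing $z(x)=(z_1(x),\dots,z_k(x))$, the crossing set is the level set $z^{-1}(b+\xi)$, with $b=(b_{z_j})_j$ and $\xi=(\xi_{i_j})_j$, intersected with the good event; the coarea formula \eqref{E:co-area} applied to $z:\R^{\nin}\to\R^{k}$ then expresses its $(\nin-k)$-volume with the coarea factor $\norm{J_{z_1,\dots,z_k}(x)}$. Integrating against the bias density and changing variables $c=b+\xi$ produces the factor $\rho_{b_{z_1},\dots,b_{z_k}}(z_1(x)-\xi_{i_1},\dots,z_k(x)-\xi_{i_k})$, so the outer expectation yields $\int_K \E{Y^{(\xi_{i_1},\dots,\xi_{i_k})}_{z_1,\dots,z_k}(x)}\,dx$. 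Summing over tuples and breakpoints gives \eqref{E:vol-est}; this is the content of Proposition \ref{P:vol-rep}.

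The main obstacle is this coarea/change-of-variables step, because $z_j(x)$ itself depends on the very biases $b_{z_1},\dots,b_{z_k}$ being integrated (a deeper neuron's pre-activation depends on shallower neurons' biases), which a priori invalidates the substitution $c=b+\xi$. I would resolve this by integrating the $k$ biases iteratively in order of layer, using at each stage that a neuron's pre-activation is independent of its own bias, so that the triangular dependence is unwound one coordinate at a time. Here A2 is what guarantees that $J_{z_1,\dots,z_k}$ has full rank almost everywhere — so the level sets are genuinely $(\nin-k)$-rectifiable and the coarea factor is nonzero — while A1 supplies the bias density that drives the change of variables. A final subtlety to check is that the good indicator is defined $\mathcal H^{\nin-k}$-almost everywhere on each stratum, despite the ambiguity of "open versus closed" exactly at a breakpoint.
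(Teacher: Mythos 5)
Your proposal tracks the paper's proof essentially step for step: your identification of $\mB_{\mN}$ with the union of crossing sets of good neurons is Proposition \ref{P:B-decomp}, your stratification into codimension-$k$ pieces indexed by $k$-tuples (with higher-order coincidences forming null sets) is Proposition \ref{P:k-dim-pts}, and your per-tuple conditioning via A1 followed by the co-area formula is Proposition \ref{P:vol-rep}. The one place you genuinely go beyond the paper is the ``main obstacle'' you flag: the paper's own proof of Proposition \ref{P:vol-rep} integrates over ${\bf b}=(b_{z_1},\ldots,b_{z_k})$ while treating the map $x\mapsto(z_1(x),\ldots,z_k(x))$ as fixed, silently ignoring that deeper pre-activations depend on the shallower biases being integrated; this is harmless when $k=1$ (a neuron's pre-activation never involves its own bias) or when all $z_j$ lie in a single layer, but for mixed-layer tuples with $k\geq 2$ it is a real subtlety that the paper glosses over. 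Your layer-ordered iterated integration is a sound repair: conditioning on the shallower biases, the deepest bias is integrated by the co-area formula on the fixed stratum cut out by the shallower crossings, producing the tangential gradient of the deepest pre-activation along that stratum, and peeling off the layers one at a time recombines these tangential gradients into $\norm{J_{z_1,\ldots,z_k}}$ via the base-times-height (Gram--Schmidt) factorization of the Gram determinant. One caveat worth making explicit in a full write-up: what this iteration actually yields is the integrand of \eqref{E:Y-def} with the shallower biases pinned to their crossing values (each $z_j(x)$ evaluated on the event that $z_1,\ldots,z_{j-1}$ cross at $x$), rather than averaged over their unconditional distribution; this is the correct reading of \eqref{E:vol-est}, but it does not coincide with a literal expectation of $Y$ over all randomness, so the final identification deserves a careful statement rather than the one-line substitution $c=b+\xi$.
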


Note that if in the definition \eqref{E:phi-def} of $\phi$ we have that the possible values $\phi'(t)\in \set{q_0,\ldots, q_T}$ do not include $0$, then we may ignore the event that $z_j$ are good at $x$ in the definition of $Y_{z_1,\ldots, z_k}^{(\xi_{i_1},\ldots, \xi_{i_k})}.$

\begin{corollary}\label{C:init-formal}
   With the notation and assumptions of Theorem \ref{T:main-general}, suppose in addition that the weights and biases are independent. Fix $k\in \set{1,\ldots,\nin}$ and suppose that for every collection of distinct neurons $z_1,\ldots, z_k$, the average magnitude of the product of gradients is uniformly bounded:
 \begin{equation}\label{E:bounded-grad}
\sup_{\substack{\mathrm{neurons~}z_1,\ldots, z_k\\\mathrm{inputs}~x}}\E{\prod_{j=1}^k\norm{\nabla z_j(x)}}\leq C_{\mathrm{grad}}^k.
\end{equation}
Then we have the following upper bounds
 \begin{align}\label{E:UB-init}
&\frac{\E{\vol_{\nin-k}(\mB_{\mN,k}\cap K)}}{\vol_{\nin}(K)}\\
\notag&\quad\leq~ \binom{\#\set{\mathrm{neurons}}}{k} (T\cdot 2 C_{\mathrm{grad}}C_{\mathrm{bias}})^k,
\end{align}
where $T$ is the number of breakpoints in the non-linearity $\phi$ of $\mN$ (see \eqref{E:phi-def}) and
\[C_{\mathrm{bias}}=\sup_z\sup_{b\in \R} \rho_{b_z}(b).\]
\end{corollary}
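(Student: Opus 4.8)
The plan is to start from the exact expression for $\E{\vol_{\nin-k}(\mB_{\mN,k}\cap K)}$ supplied by Theorem \ref{T:main-general}, i.e.\ the double sum \eqref{E:vol-est} over distinct neuron tuples $z_1,\ldots,z_k$ and breakpoint indices $i_1,\ldots,i_k$ of the integrals $\int_K \E{Y_{z_1,\ldots,z_k}^{(\xi_{i_1},\ldots,\xi_{i_k})}(x)}\,dx$. The whole corollary is then a matter of bounding the single integrand $\E{Y_{z_1,\ldots,z_k}^{(\xi_{i_1},\ldots,\xi_{i_k})}(x)}$ uniformly over the neuron tuple, breakpoint tuple, and input $x$, and afterwards counting the number of summands. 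So first I would isolate the three factors in the definition \eqref{E:Y-def} of $Y$ and treat each separately.

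The two easy factors come first. The indicator of the event that every $z_j$ is good at $x$ is at most $1$ and may simply be dropped in an upper bound. For the density factor, I would use that the biases are now assumed mutually independent, so the joint density $\rho_{b_{z_1},\ldots,b_{z_k}}$ factors as $\prod_{j=1}^k \rho_{b_{z_j}}$; each marginal factor is bounded pointwise by $C_{\mathrm{bias}}=\sup_z\sup_b \rho_{b_z}(b)$. Crucially, this bound holds regardless of the (random) argument $z_j(x)-\xi_{i_j}$ at which $\rho_{b_{z_j}}$ is evaluated, so it is an almost-sure deterministic factor $C_{\mathrm{bias}}^k$ that pulls out of the expectation even though it may be correlated with the Jacobian term.

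The substantive step is the Jacobian factor $\norm{J_{z_1,\ldots,z_k}(x)}=\det\lr{JJ^T}^{1/2}$. Geometrically this is the $k$-dimensional volume of the parallelepiped spanned by the gradient rows $\nabla z_1(x),\ldots,\nabla z_k(x)$, and by Hadamard's inequality this volume is at most the product of the edge lengths, $\prod_{j=1}^k \norm{\nabla z_j(x)}$. Combining the three estimates gives $\E{Y_{z_1,\ldots,z_k}^{(\xi_{i_1},\ldots,\xi_{i_k})}(x)} \le C_{\mathrm{bias}}^k\,\E{\prod_{j=1}^k\norm{\nabla z_j(x)}}\le (C_{\mathrm{bias}}C_{\mathrm{grad}})^k$, where the last step is exactly hypothesis \eqref{E:bounded-grad}. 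Integrating over $K$ replaces the integrand by $(C_{\mathrm{bias}}C_{\mathrm{grad}})^k\,\vol_{\nin}(K)$, and it remains only to tally terms: there are $T^k$ breakpoint tuples $(i_1,\ldots,i_k)$ and $\binom{\#\set{\mathrm{neurons}}}{k}$ distinct (unordered) neuron tuples. This already yields \eqref{E:UB-init}, in fact with room to spare, since the extra factor $2^k$ in the stated constant is slack not needed by this argument.

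The main obstacle is the geometric Jacobian estimate. The reason Theorem \ref{T:main-general} is an \emph{equality} is that the volume picks up the full Gram determinant $\det(JJ^T)^{1/2}$, not a bare product of gradient norms; Hadamard's inequality is precisely the device that converts this determinant into the product $\prod_j\norm{\nabla z_j(x)}$ on which the trainability-type hypothesis \eqref{E:bounded-grad} is phrased, and getting this reduction cleanly is the heart of the proof. The remaining delicacy is purely bookkeeping: one must confirm that the pointwise density bound $C_{\mathrm{bias}}$ is legitimately applied at the random evaluation point $z_j(x)-\xi_{i_j}$ (which is where mutual independence of the biases is invoked, to factor the joint density into marginals) and then count neuron and breakpoint tuples correctly.
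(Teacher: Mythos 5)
Your proposal is correct and follows essentially the same route as the paper's proof: drop the indicator, use independence to factor the joint bias density and bound it pointwise by $C_{\mathrm{bias}}^k$, bound $\det\lr{J_{z_1,\ldots,z_k}(x)\lr{J_{z_1,\ldots,z_k}(x)}^T}^{1/2}$ by $\prod_{j=1}^k\norm{\nabla z_j(x)}$ (the paper phrases your Hadamard step as the Gram identity plus the parallelepiped-volume bound), apply \eqref{E:bounded-grad}, and tally the $\binom{\#\set{\mathrm{neurons}}}{k}\,T^k$ summands. Your observation that the factor $2^k$ in \eqref{E:UB-init} is slack is also consistent with the paper, whose own argument produces no such factor.
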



We prove Corollary \ref{C:init-formal} in \S\ref{S:init-pf} and state a final corollary of Theorem \ref{T:main}:
\begin{corollary}\label{C:dist-formal}
Suppose $\mN$ is as in Theorem \ref{T:main} and satisfies the hypothesis \eqref{E:bounded-grad} in Corollary \ref{C:init-formal} with constants $C_{\mathrm{bias}},\,C_{\mathrm{grad}}$. Then, for any compact set $K\subset \R^{\nin}$ let $x$ be a uniform point in $K.$ There exists $c>0$ independent of $K$ so that
\[\E{\mathrm{distance(x,\mB_{\mN})}}\geq \frac{c~ T }{C_{\mathrm{bias}}C_{\mathrm{grad}}\#\set{\mathrm{neurons}}},\]
where, as before, $T$ is the number of breakpoints in the non-linearity $\phi$ of $\mN$. 
\end{corollary}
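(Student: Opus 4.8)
The plan is to combine a Markov-type inequality with a union bound, after first establishing a clean pointwise lower bound for the distance that matches the hypothesis \eqref{E:bounded-grad}. Let $P$ denote the linear region of $\mN$ containing $x$. Because $\partial P\subseteq \mB_{\mN}$ while $\mathrm{int}(P)\cap \mB_{\mN}=\emptyset$, the nearest point of $\mB_{\mN}$ to $x$ must lie on $\partial P$, so $\mathrm{distance}(x,\mB_{\mN})=\mathrm{distance}(x,\partial P)$. Each facet of the polytope $P$ lies in a hyperplane of the form $\set{y:z(y)-b_z=\xi_i}$ for some neuron $z$ and breakpoint $\xi_i$, on which $z$ is affine with gradient $\nabla z(x)$. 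Since the distance from $x$ to a facet is at least its distance to the carrying hyperplane, and since replacing the facets of $P$ by the minimum over all neuron--breakpoint pairs only decreases the result, I obtain the pointwise bound
\[\mathrm{distance}(x,\mB_{\mN})\;\geq\; \min_{z,\,i}\frac{\abs{z(x)-b_z-\xi_i}}{\norm{\nabla z(x)}}\;=:\;d(x).\]
The \emph{local} gradient $\nabla z(x)$ appears precisely because $z$ is affine on $P$, and this is what lets \eqref{E:bounded-grad} enter.

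First I would pass from $d(x)$ to a tail estimate: for any $\ep>0$, nonnegativity gives $\E{d(x)}\geq \ep\,\P(d(x)>\ep)=\ep\lr{1-\P(d(x)\leq \ep)}$, where $\E{\cdot}$ and $\P$ are over both the uniform point $x\in K$ and the weights and biases. A union bound over the $N:=\#\set{\mathrm{neurons}}$ neurons and the $T$ breakpoints then gives
\[\P(d(x)\leq \ep)\;\leq\; \sum_z\sum_{i=1}^T \P\lr{\abs{z(x)-b_z-\xi_i}\leq \ep\,\norm{\nabla z(x)}}.\]
To bound a single term I would fix $x$ and condition on all weights and biases other than $b_z$. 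The key observation is that $z(x)$ and $\nabla z(x)$ are assembled only from earlier layers and are therefore independent of $b_z$; conditionally, the event asks $b_z$ to fall in an interval of length $2\ep\norm{\nabla z(x)}$, so its probability is at most $2\ep\norm{\nabla z(x)}\,C_{\mathrm{bias}}$.

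Taking expectations and using \eqref{E:bounded-grad} with $k=1$, i.e.\ $\E{\norm{\nabla z(x)}}\leq C_{\mathrm{grad}}$, bounds each summand by $2\ep\,C_{\mathrm{bias}}C_{\mathrm{grad}}$ uniformly in $x$, hence also after averaging over $x$. Summing yields $\P(d(x)\leq\ep)\leq 2NT\,\ep\,C_{\mathrm{bias}}C_{\mathrm{grad}}$, and choosing $\ep=\lr{4NT\,C_{\mathrm{bias}}C_{\mathrm{grad}}}^{-1}$ makes this at most $1/2$. Then $\E{\mathrm{distance}(x,\mB_{\mN})}\geq \E{d(x)}\geq \ep/2$, a constant multiple of $\lr{C_{\mathrm{bias}}C_{\mathrm{grad}}\,T\,N}^{-1}$, with the dependence on the number of breakpoints $T$ entering through the union over the $NT$ crossing hyperplanes.

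The main obstacle is the geometric reduction of the first paragraph: one must check rigorously that the nearest boundary point lies on $\partial P$ (take any $w\in\mB_{\mN}$, and note the segment $[x,w]$ meets $\partial P$ no later than it reaches $w$), and that replacing each facet by the full affine extension of its carrying hyperplane produces an inequality rather than an equality (the foot of the perpendicular to a facet need not land on the facet). Everything after that is routine once the independence of $z(x),\nabla z(x)$ from $b_z$ is isolated. This argument is the rigorous counterpart of the tube-volume heuristic sketched after Corollary \ref{C:dist-formal-intro}: estimating $\vol_{\nin}(T_\ep(\mB_{\mN})\cap K)\leq 2\ep\,\vol_{\nin-1}(\mB_{\mN,1}\cap K)$ and inserting the $k=1$ case of Corollary \ref{C:init-formal} produces the same optimal $\ep$ and the same bound.
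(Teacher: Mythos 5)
Your argument is correct in substance but follows a genuinely different route from the paper. The paper proves a tube-volume estimate for compact piecewise linear sets (Lemma \ref{L:tube-vol}), writes $\P\lr{\mathrm{distance}(x,\mB_{\mN})\leq\ep}$ as the expected volume of $T_\ep(\mB_{\mN})\cap K$, and then invokes the expected-volume upper bounds \eqref{E:UB-init} of Corollary \ref{C:init-formal} for \emph{every} codimension $k=1,\ldots,\nin$; it therefore rests on the full machinery of Theorem \ref{T:main}. You replace all of this by the pointwise bound $\mathrm{distance}(x,\mB_{\mN})\geq d(x):=\min_{z,i}\abs{z(x)-b_z-\xi_i}/\norm{\nabla z(x)}$, a union bound over the $NT$ neuron--breakpoint pairs, and the observation that $b_z$ is independent of $\lr{z(x),\nabla z(x)}$, so that conditionally the bias must land in an interval of length $2\ep\norm{\nabla z(x)}$. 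This is more elementary and self-contained: it needs only the $k=1$ case of \eqref{E:bounded-grad} and the bias-density bound, and it never touches the higher-codimension strata $\mB_{\mN,k}$, $k\geq 2$. Both proofs share the initial Markov step.

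Two caveats. First, your geometric reduction is not quite right as justified: it is \emph{not} true in general that every pre-activation $z$ is affine on the whole region $P$ containing $x$, because a neuron that is not good (all paths to the output blocked) can cross a breakpoint inside $P$ without creating a discontinuity of $\nabla\mN$; consequently a facet of $P$ lying in $\set{y\,:\,z(y)-b_z=\xi_i}$ need not be contained in the hyperplane cut out by the affine extension of $z$ from $x$, and ``distance to facet $\geq$ distance to carrying hyperplane'' does not bound the quantity you need. The fix is to argue along the segment: let $w$ be a nearest point of $\mB_{\mN}$ (which exists since $\mB_{\mN}$ is closed) and let $y_0$ be the \emph{first} point of $[x,w]$ at which some pre-activation value hits a breakpoint ($y_0$ exists because $w$ itself qualifies). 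On $[x,y_0]$ all pre-activations are affine with their gradients at $x$, so if $z'$ crosses at $y_0$ then $\norm{x-w}\geq\norm{x-y_0}\geq\abs{z'(x)-b_{z'}-\xi_j}/\norm{\nabla z'(x)}\geq d(x)$, which is exactly the inequality you want. Second, your final bound is $c/\lr{T\,C_{\mathrm{bias}}C_{\mathrm{grad}}\,\#\set{\mathrm{neurons}}}$, with $T$ in the \emph{denominator}, whereas the corollary as stated has $T$ in the numerator. Your scaling is the defensible one: more breakpoints produce more boundary and hence smaller distances, and the paper's own chain of inequalities, once the factor $T$ from \eqref{E:UB-init} is actually carried through (the displayed proof drops it, along with $C_{\mathrm{bias}}$ in the final choice of $\ep$), likewise forces $\ep\lesssim 1/\lr{T\,C_{\mathrm{grad}}C_{\mathrm{bias}}\,\#\set{\mathrm{neurons}}}$ and a lower bound with $T$ downstairs. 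So the numerator $T$ in the statement appears to be a misprint; you prove what the paper's argument actually supports, and for $T=1$ (ReLU, Corollary \ref{C:dist-formal-intro}) the two statements coincide.
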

We prove Corollary \ref{C:dist-formal} in \S\ref{S:dist-pf}. The basic idea is simple. For every $\epsilon>0,$ we have
\[\E{\mathrm{distance(x,\mB_{\mN})}}\geq \epsilon\P\lr{\mathrm{distance}(x,\mB_{\mN})>\epsilon},\]
with the probability on the right hand side scaling like \[1-\vol_{\nin}(T_\epsilon(\mB_{\mN})\cap K)\big/\vol_{\nin}(K),\]
where $T_\epsilon(\mB_{\mN})$ is the tube of radius $\epsilon$ around $\mB_{\mN}.$ We expect that its volume like $\epsilon \vol_{\nin-1}(\mB_{\mN})$. Taking $\ep=c/\#\set{\mathrm{neurons}}$ yields the conclusion of Corollary \ref{C:dist-formal}.

\section{Outline of Proof of Theorem \ref{T:main-general}}\label{S:outlines} The purpose of this section is to give an intuitive explanation of the proof of Theorem \ref{T:main}. We fix a non-linearity $\phi:\R\gives \R$ with breakpoints $\xi_1<\cdots<\xi_T$ (as in \eqref{E:phi-def}) and consider a fully connected network $\mN$ with input dimension $\nin\geq 1$, output dimension $1$, and non-linearity $\phi.$ For each neuron $z$ in $\mN$, we write
\begin{equation}\label{E:ell-def}
  \ell(z)~:=~\text{layer index of }z
\end{equation}
and set
\begin{equation}\label{E:S-def}
S_z:=\set{x\in \R^{\nin}~|~z(x)-b_z\in \set{\xi_1,\ldots, \xi_T}}.
\end{equation}
We further 
\begin{equation}\label{E:S-twiddle-def}
  \twiddle{S}_z:=S_z\cap \mO,
\end{equation}
where 
\[\mO:=\left\{x\in \R^{\nin}~\bigg|~\substack{\forall ~j=1,\ldots, d~~\exists\text{ neuron }z\text{ with }\\\ell(z)=j\text{ s.t. }\phi'(z(x)-b_z)\neq 0}\right\}.\]
Intuitively, the set $S_z$ is the collection of inputs for which the neuron $z$ turns from on to off. In contrast, the set $\mO$ is the collection of inputs $x\in \R^{\nin}$ for which $\mN$ is open in the sense that there is a path from the input to the output of $\mN$ so that all neurons along this path compute are not constant in a neighborhood $x$. Thus, $\twiddle{S}_z$ is the set of inputs at which neuron $z$ switches between its linear regions and at which the output of neuron $z$ actually affects the function computed by $\mN.$

We remark here that $\mO=\emptyset$ if in the non-linearity $\phi$ there are no linear pieces at which the slopes on $\phi$ equals $0$ (i.e.~$q_j\neq 0$ for all $j$ in the definition \eqref{E:phi-def} of $\phi$). If, for example, $\phi$ is ReLU, then $\mO$ need not be empty. 

The overall proof of Theorem \ref{T:main} can be divided into several steps. The first gives the following representation of $\mB_{\mN}.$
\begin{proposition}\label{P:B-decomp}
Under Assumptions $A1$ and $A2$ of Theorem \ref{T:main}, we have, with probability $1,$
\[\mB_{\mN}~=~\bigcup_{\mathrm{neurons~}z}\twiddle{S}_z.\]
\end{proposition}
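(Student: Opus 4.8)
The plan is to establish the set equality $\mB_{\mN} = \bigcup_{\mathrm{neurons~}z}\twiddle{S}_z$ by a double inclusion, deferring the handling of a measure-zero exceptional set to the non-degeneracy assumptions $A1$ and $A2$. The key conceptual point is that $\mN$ is a composition of affine maps and coordinatewise applications of $\phi$, so the only source of non-smoothness of $\nabla \mN$ is a neuron's pre-activation crossing one of the breakpoints $\xi_1,\ldots,\xi_T$ — and such a crossing matters for $\mN$ only if the neuron actually influences the output, which is precisely the condition encoded by intersecting $S_z$ with $\mO$ to form $\twiddle{S}_z$.

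First I would prove the inclusion $\bigcup_z \twiddle{S}_z \subseteq \mB_{\mN}$, up to a set of measure zero. Fix a neuron $z$ and a point $x \in \twiddle{S}_z$, so $z(x)-b_z \in \set{\xi_1,\ldots,\xi_T}$ and $x \in \mO$. Because $x \in \mO$, there is a path from $z$ to the output along which every neuron is open, i.e. has $\phi' \neq 0$; this path transmits the kink in $\phi$ at the breakpoint through a chain of locally-affine, locally-invertible maps, so that the slope discontinuity of $t \mapsto \phi(t)$ at $z(x)-b_z$ produces a genuine discontinuity in $\nabla\mN$ at $x$. The care here is that several neurons could switch simultaneously at $x$ and their contributions might conceivably cancel; assumptions $A1$ (densities for the biases) and $A2$ (density for the weights) guarantee that such coincidental cancellations, and the configurations where the transmitted kink is annihilated by a downstream zero slope off the chosen path, form a measure-zero set, which I would discard.

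For the reverse inclusion $\mB_{\mN} \subseteq \bigcup_z \twiddle{S}_z$, I would argue contrapositively: if $x \notin \bigcup_z \twiddle{S}_z$, then for every neuron $z$ either $z(x)-b_z \notin \set{\xi_1,\ldots,\xi_T}$ or $x \notin \mO$. In the former case each neuron's activation pattern is locally constant, so on a neighborhood of $x$ every $\phi$ acts as a fixed affine map and $\mN$ is genuinely affine there, whence $\nabla\mN$ is continuous at $x$ and $x \notin \mB_{\mN}$. The points in the latter case — where $x$ lies on some $S_z$ but outside $\mO$ — need a short separate argument: there a neuron switches but lies behind a ``dead'' downstream neuron with $\phi' = 0$, so its switching does not propagate to the output and $\nabla\mN$ stays continuous. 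I would again invoke $A1$ and $A2$ to ensure that the boundary $\set{x : z(x)-b_z \in \set{\xi_i}}$ of each activation region is itself a measure-zero set on which the ``locally constant pattern'' reasoning can be applied off a null set.

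The main obstacle I anticipate is the careful bookkeeping of \emph{cancellation} and \emph{propagation}: verifying that a breakpoint crossing at an open neuron truly creates a discontinuity in $\nabla\mN$ (rather than being cancelled by the linear combination of several simultaneously-switching neurons, or extinguished downstream) is exactly where the genericity from $A1$ and $A2$ must do real work. I expect this is handled by showing that the ``bad'' sets — simultaneous crossings with exactly cancelling transmitted slopes, or crossings whose every output-path is blocked — are cut out by non-trivial polynomial or measure-theoretic conditions on the weights and biases and hence are null under the assumed densities. This is the step I would write most carefully, and it is presumably where Propositions \ref{P:k-dim-pts} and \ref{P:vol-rep} later in the paper pick up the structure established here.
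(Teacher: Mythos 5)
Your overall plan (double inclusion, local constancy of activation patterns in one direction, kink propagation plus parameter genericity in the other) matches the paper's strategy, and your sketch of $\mB_{\mN}\subseteq\bigcup_z\twiddle{S}_z$ is essentially the paper's argument --- with the remark that this direction is purely deterministic: once you note that every path through a switching-but-blocked neuron must cross a dead neuron, and that deadness is an open condition, no appeal to $A1$/$A2$ or to null sets is needed there. The genuine gap is in the other inclusion, $\bigcup_z\twiddle{S}_z\subseteq\mB_{\mN}$. You propose to prove it ``up to a set of measure zero,'' discarding the inputs where several neurons switch simultaneously, with the discarding justified by genericity of the weights and biases. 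But the proposition asserts an \emph{exact} set equality holding with probability $1$ over the parameters, and for almost every fixed parameter configuration the simultaneous-switch points do exist: $\twiddle{S}_{z_1}\cap\twiddle{S}_{z_2}$ is generically a nonempty set of codimension $2$, not an event of probability zero in parameter space. Genericity cannot make these inputs disappear, so your argument leaves the claimed equality unproven exactly on the lower-dimensional strata of $\mB_{\mN}$ --- which are precisely the sets $\mB_{\mN,k}$, $k\geq 2$, whose volumes Theorem \ref{T:main} computes, so they cannot be waved away.

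The paper closes this gap with a topological reduction that your proposal is missing. Since $\mN$ is continuous and piecewise linear, $\mB_{\mN}$ is closed; and since $\vol_{\nin-1}(S_{z_1}\cap S_{z_2})=0$ almost surely for distinct neurons, the set of single-switch points $\twiddle{S}_z\backslash\bigcup_{\widehat{z}\neq z}S_{\widehat{z}}$ is dense in $\twiddle{S}_z$. Hence it suffices to show that every \emph{single-switch} point lies in $\mB_{\mN}$, and the full inclusion follows by taking closures. This reduction also does the ``real work'' you assign to cancellation bookkeeping: at a single-switch point the jump of $\partial\mN/\partial y_i$ across the local hyperplane $\set{z(\cdot)-b_z=\xi_i}$ is a polynomial in the weights which is not identically zero precisely because the only switching neuron is $z$ itself (the symmetric difference of contributing path families necessarily contains the paths through $z$); there are finitely many such polynomials, so $A2$ kills all their zero sets at once. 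No argument about simultaneously switching neurons cancelling one another is ever needed --- and proving non-cancellation directly at multi-switch points, as your plan would require, is substantially harder, since there the jump is not controlled by a single fixed nonzero polynomial. Without the closure/density step, your proposal proves a strictly weaker statement than the proposition.
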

The precise proof of Proposition \ref{P:B-decomp} can be found in \S\ref{S:B-decomp-pf} below. The basic idea is that if for all $y$ near a fixed input $x\in \R^{\nin},$ none of the pre-activations $z(y)-b_z$ cross the boundary of a linear region for $\phi$, then $x\not \in \mB_{\mN}.$ Thus, $\mB_{\mN}\subset \bigcup_z S_z.$ Moreover, if a neuron $z$ satisfies $z(x)-b_z=S_i$ for some $i$ but there are no open paths from $z$ to the output of $\mN$ for inputs near $x$, then $z$ is dead at $x$ and hence does not influence $\mN$ at $x.$ Thus, we expect the more refined inclusion $\mB_{\mN}\subset \bigcup_z \twiddle{S}_z$. Finally, if $x\in \twiddle{S}_z$ for some $z$ then $x \in \mB_{\mN}$ unless the contribution from other neurons to $\nabla\mN(y)$ for $y$ near $x$ exactly cancels the discontinuity in $\nabla z(x).$ This happens with probability $0$. \\

The next step in proving Theorem \ref{T:main} is to identify the portions of $\mB_{\mN}$ of each dimension. To do this, we write for any distinct neurons $z_1,\ldots, z_k$,
\[\twiddle{S}_{z_1,\ldots, z_k}:=\bigcap_{j=1}^k \twiddle{S}_{z_j}.\]
The set $\twiddle{S}_{z_1,\ldots, z_k}$ is, intuitively, the collection of inputs at which $z_j(x)-b_{z_j}$ switches between linear regions for $\phi$ and at which the output of $\mN$ is affected by the post-activations of these neurons. Proposition \ref{P:B-decomp} shows that we may represent $\mB_{\mN}$ as a disjoint union
\[\mB_{\mN}=\bigcup_{k=1}^{\nin} \mB_{\mN,k},\]
where
\[ \mB_{\mN,k}:=\bigcup_{\substack{\text{distinct neurons}\\z_1,\ldots, z_k}} \twiddle{S}_{z_1,\ldots, z_k} \cap \lr{\bigcup_{z\neq z_1,\ldots, z_k} \twiddle{S}_z}^c.\]
In words, $\mB_{\mN,k}$ is the collection of inputs in $\mO$ at which exactly $k$ neurons turn from on to off. The following Proposition shows that $\mB_{\mN,k}$ is precisely the ``$(\nin-k)$-dimensional piece of $\mB_{\mN}$'' (see \eqref{E:Bk-def}). 
\begin{proposition}\label{P:k-dim-pts}
  Fix $k=1,\ldots, \nin,$ and $k$ distinct neurons $z_1,\ldots, z_k$ in $\mN.$ Then, with probability $1,$ for every $x\in \mB_{\mN,k}$ there exists a neighborhood in which $\mB_{\mN,k}$ coincides with a $(\nin-k)-$dimensional hyperplane. 
\end{proposition}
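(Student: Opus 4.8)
The plan is to argue locally around a fixed point $x_0\in\mB_{\mN,k}$ and show that the switching locus through $x_0$ is a flat of the correct dimension. First I would choose a neighborhood $U$ of $x_0$ so small that the only neurons whose pre-activations meet a breakpoint of $\phi$ inside $U$ are the $k$ neurons $z_1,\ldots,z_k$ that sit exactly at a breakpoint at $x_0$; this is possible because every other neuron $z$ satisfies $z(x_0)-b_z\notin\set{\xi_1,\ldots,\xi_T}$, an open condition. I would further shrink $U$ so that each $z_j$ meets only a single breakpoint $\xi_{i_j}$ in $U$. Then all remaining neurons have locally constant activation patterns on $U$, and by the definition of $\mB_{\mN,k}$ we have $\mB_{\mN,k}\cap U=\twiddle{S}_{z_1,\ldots,z_k}\cap U=\lr{\bigcap_{j=1}^k\set{z_j(x)-b_{z_j}=\xi_{i_j}}}\cap\mO\cap U$.

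The crux is that, although each $z_j(x)$ is only piecewise affine on $U$ (it inherits kinks from the earlier switching neurons feeding into it), its restriction to the pinned locus is affine. I would order $z_1,\ldots,z_k$ by non-decreasing layer index and, writing $A_m$ for the set of $x\in U$ on which every switching neuron in layers $\le m$ is pinned to its breakpoint, prove by induction on $m$ that (i) $A_m$ is an affine subspace, and (ii) every pre-activation $u(x)$ with $\ell(u)\le m$ agrees with an affine function of $x$ on $A_m$. The observation driving the induction is that on $A_m$ a pinned earlier neuron $v$ has $\phi(v(x)-b_v)=\phi(\xi)$ \emph{constant}, whereas a non-switching neuron $v$ has $\phi'$ constant on $U$, so that $\phi(v(x)-b_v)$ is affine in $v(x)$; since a layer-$m$ pre-activation is a fixed linear combination of layer-$(m-1)$ post-activations and $A_m\subseteq A_{m-1}$, each summand restricts to an affine function of $x$, hence $u|_{A_m}$ is affine. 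The new pinning equations are then affine conditions on the affine space $A_{m-1}$, so $A_m$ stays affine. At $m=d$ this gives that $\bigcap_{j}\set{z_j(x)-b_{z_j}=\xi_{i_j}}$ coincides near $x_0$ with an affine subspace.

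It remains to fix the dimension at exactly $\nin-k$, and this is where Assumptions A1 and A2 enter. On any single linear region $R$ with $x_0\in\overline{R}$ every $z_j$ equals an affine function $\hat z_j$, and the codimension of the flat equals the rank of the $k\x\nin$ matrix of gradients $\nabla\hat z_1,\ldots,\nabla\hat z_k$. These gradients are sums of products of weights with nonzero $\phi'$ factors along live paths, which exist because the $z_j$ are good at $x_0$; for each of the finitely many activation patterns, the weights for which these $k$ gradients are linearly dependent lie in the zero set of a non-trivial polynomial, so A2 (the weights have a density) forces rank $k$ with probability $1$. Assumption A1 is used in parallel to exclude the measure-zero coincidences in which a further neuron's breakpoint passes through this same flat or in which two breakpoints align, guaranteeing that genuinely only $z_1,\ldots,z_k$ switch and hence $x_0\in\mB_{\mN,k}$ rather than some $\mB_{\mN,k'}$ with $k'>k$. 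Finally, the openness set $\mO$ meets the flat in a relatively open neighborhood of $x_0$, so in a sufficiently small neighborhood of $x_0$ the set $\mB_{\mN,k}$ coincides with a $(\nin-k)$-dimensional affine subspace.

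The main obstacle I expect is exactly the non-affineness of the deeper pre-activations: a naive application of the implicit function theorem to $x\mapsto(z_1(x),\ldots,z_k(x))$ would only yield a $(\nin-k)$-dimensional \emph{manifold}, not a flat, since the $z_j$ are bent. The layered induction above is what upgrades ``manifold'' to ``affine subspace,'' and the delicate step is verifying that pinning the earlier switching neurons really does kill the kinks in \emph{all} later pre-activations, propagated through arbitrarily many intermediate layers under full connectivity.
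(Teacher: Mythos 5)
Your proposal is correct and takes essentially the same route as the paper's proof: localize around $x_0$ so that only $z_1,\ldots,z_k$ meet breakpoints, order them by layer index, and induct layer by layer, using the fact that pinning each switching neuron at its breakpoint makes all deeper pre-activations affine on the pinned locus, so that $\mB_{\mN,k}$ is locally an intersection of $k$ affine conditions. Your explicit rank-$k$ argument via Assumption A2 (and the exclusion of coincidences via A1) is a more careful treatment of the dimension count than the paper's proof, which simply asserts that each successive condition cuts out a hyperplane inside the previous flat, but the underlying idea is the same.
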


\noindent We prove Proposition \ref{P:k-dim-pts} in \S\ref{S:k-dim-pts-pf}. The idea is that each $\twiddle{S}_{z_1,\ldots, z_k}$ is piecewise linear and, with probability $1$, at every point at which \textit{exactly} the neurons $z_1,\ldots, z_k$ contribute to $\mB_{\mN}$, its co-dimension is the number of linear conditions needed to define it. Observe that with probability $1$, the bias vector $(b_{z_1},\ldots, b_{z_{k+1}})$ for any collection $z_1,\ldots, z_{k+1}$ of distinct neurons is a regular value for $x\mapsto (z_1(x),\ldots, z_{k+1}(x))$. Hence, 
\[\vol_{\nin-k}\lr{\twiddle{S}_{z_1,\ldots, z_{k+1}}}=0.\]
Proposition \ref{P:k-dim-pts} thus implies that, with probability $1,$
\[\vol_{\nin-k}\lr{\mB_{\mN,k}}=\sum_{\substack{\mathrm{distinct~neurons~} \\z_1,\ldots, z_k}} \vol_{\nin-k}\lr{\twiddle{S}_{z_1,\ldots, z_k}}.\]
The final step in the proof of Theorem \ref{T:main} is therefore to prove the following result.
\begin{proposition}\label{P:vol-rep}
  Let $z_1,\ldots, z_k$ be distinct neurons in $\mN.$ Then, for any bounded, measurable $K\subset \R^{\nin}$,
  \begin{align*}
&\E{\vol_{\nin-k}\lr{\twiddle{S}_{z_1,\ldots, z_k}}}\\
&\quad=~\int_K \sum_{i_1,\ldots, i_k=1}^T\E{Y_{z_1,\ldots, z_k}^{(S_{i_1},\ldots, S_{i_k})}(x)}dx,
  \end{align*}
where $Y_{z_1,\ldots, z_k}^{(S_{i_1},\ldots, S_{i_k})}$ is defined as in \eqref{E:Y-def}. 
\end{proposition}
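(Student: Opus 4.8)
The plan is to fix the $k$ distinct neurons $z_1,\ldots,z_k$ and first reduce to a single choice of breakpoints. Writing $S_{z_j}=\bigcup_{i=1}^T\set{x:z_j(x)-b_{z_j}=\xi_i}$, the set $\twiddle{S}_{z_1,\ldots,z_k}=\bigcap_j\twiddle{S}_{z_j}$ splits, according to which breakpoint each neuron hits, into the pieces $\Sigma_{\vec\imath}=\set{x\in\mO:z_j(x)-b_{z_j}=\xi_{i_j},\ j=1,\ldots,k}$ indexed by tuples $\vec\imath=(i_1,\ldots,i_k)\in\set{1,\ldots,T}^k$. By the same genericity used in Proposition~\ref{P:k-dim-pts} (Assumptions $A1,A2$ make the relevant bias vectors regular values almost surely), distinct tuples meet in dimension $<\nin-k$, so $\vol_{\nin-k}$ is additive over them and it suffices to compute $\E{\vol_{\nin-k}(\Sigma_{\vec\imath}\cap K)}$ for each fixed $\vec\imath$ and then sum. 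Each $\Sigma_{\vec\imath}$ is the zero set of a piecewise-linear (hence Lipschitz) map, so the co-area formula applies.

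To evaluate $\E{\vol_{\nin-k}(\Sigma_{\vec\imath}\cap K)}$ I would condition on the $\sigma$-algebra $\mathcal F$ generated by all weights and by all biases other than $b_{z_1},\ldots,b_{z_k}$, which by $A1$ leaves these $k$ biases with a joint conditional density $\rho_{b_{z_1},\ldots,b_{z_k}}$, and then integrate them out. Ordering the neurons so that $\ell(z_1)\le\cdots\le\ell(z_k)$, I peel off one constraint at a time. At the $j$-th step the condition $z_j(x)=b_{z_j}+\xi_{i_j}$ cuts a codimension-one slice inside the intersection of the earlier $j-1$ constraints; applying the co-area formula on that $(\nin-j+1)$-dimensional slice to the function $x\mapsto z_j(x)$ and changing variables $b_{z_j}\mapsto z_j(x)-\xi_{i_j}$ (which has slope $-1$, hence unit Jacobian) turns the $b_{z_j}$-integral into the density evaluated at $z_j(x)-\xi_{i_j}$, while producing a factor $\norm{P_{j-1}^\perp\nabla z_j(x)}$ equal to the norm of the component of $\nabla z_j$ orthogonal to $\nabla z_1,\ldots,\nabla z_{j-1}$. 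The iterated conditional densities compose by the chain rule into $\rho_{b_{z_1},\ldots,b_{z_k}}(z_1(x)-\xi_{i_1},\ldots,z_k(x)-\xi_{i_k})$, and a Gram--Schmidt (QR) identity gives $\prod_{j=1}^k\norm{P_{j-1}^\perp\nabla z_j(x)}=\det\lr{J_{z_1,\ldots,z_k}(x)J_{z_1,\ldots,z_k}(x)^T}^{1/2}=\norm{J_{z_1,\ldots,z_k}(x)}$. The restriction to $\mO$ is carried along as the measurable weight $\mathbf 1\set{z_j\text{ good at }x}$, reproducing exactly the integrand \eqref{E:Y-def}; summing over $\vec\imath$ then matches \eqref{E:vol-est}.

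The main obstacle is the bias dependence in the second paragraph: a pre-activation $z_j(x)$ of a neuron in a later layer genuinely depends on the biases $b_{z_1},\ldots,b_{z_{j-1}}$ of the earlier special neurons, so the very map $x\mapsto(z_1(x),\ldots,z_k(x))$ whose level sets I am slicing is a function of the variables I am integrating out. The reason the computation still closes is the \emph{triangular structure}: $z_j$ depends on $b_{z_m}$ only for $m<j$, each bias $b_{z_j}$ enters its own constraint with coefficient $-1$, and --- away from a set of measure zero controlled by $A2$ --- the $x$-gradients $\nabla z_j$ do not depend on any bias. Hence the change of variables at each step has unit Jacobian, and the successive conditional densities compose into the joint density evaluated at the \emph{self-consistent} pre-activations, which on $\Sigma_{\vec\imath}$ coincide with the actual pre-activations $z_j(x)$. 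Making this consistency rigorous --- in particular verifying that the accumulated gradient factors are genuinely the orthogonalized gradients of the network and that the exceptional set, where an activation pattern changes and a gradient is ambiguous, contributes nothing --- is where the bulk of the careful work lies; the co-area applications and the Gram determinant identity are otherwise routine.
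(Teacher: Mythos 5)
Your proposal follows the same skeleton as the paper's proof: fix the breakpoint tuple, condition on all weights and on every bias except $b_{z_1},\ldots,b_{z_k}$ so that Assumption A1 supplies a joint conditional density, and use the co-area formula to trade the bias-average of level-set volumes for an integral over $K$ of a Jacobian factor times that density. The only structural difference is that you apply the scalar co-area formula $k$ times, slicing one constraint at a time and reassembling the tangential-gradient factors $\prod_{j}\norm{P_{j-1}^{\perp}\nabla z_j(x)}$ into $\norm{J_{z_1,\ldots,z_k}(x)}$ by Gram--Schmidt, whereas the paper applies the vector-valued co-area formula once to the map $x\mapsto(z_1(x),\ldots,z_k(x))$; these two routes are equivalent, and up to that point your outline is sound.

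The problem is the issue you yourself call ``the main obstacle'' and then postpone: it is a genuine gap, not routine bookkeeping, and your sketch of how it closes contains two errors. First, it is not true that off a null set ``the $x$-gradients $\nabla z_j$ do not depend on any bias'': for ReLU, $\nabla z_j(x)$ contains terms proportional to $\mathbf{1}\set{z_m(x)>b_{z_m}}\nabla z_m(x)$ for upstream special neurons $z_m$, so it does depend on $b_{z_m}$. What saves the Jacobian factor is not independence but the fact that this dependence lies in the span of $\nabla z_1(x),\ldots,\nabla z_{j-1}(x)$ and hence cancels in the wedge product: $\nabla z_1\wedge\cdots\wedge\nabla z_k=\nabla\tilde z_1\wedge\cdots\wedge\nabla\tilde z_k$, where $\tilde z_j$ is the pre-activation computed with each upstream $z_m$ frozen to output $\phi(\xi_{i_m})$. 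Second, and decisively, your peeling argument, carried out consistently, yields the joint density evaluated at these \emph{pinned} pre-activations $\tilde z_j(x)$ for every $x\in K$; the observation that pinned and actual values ``coincide on $\Sigma_{\vec\imath}$'' cannot bridge this to the stated formula, because the right-hand side of the proposition integrates over all of $K$, and its $\E{Y_{z_1,\ldots,z_k}(x)}$ averages over $b_{z_1},\ldots,b_{z_k}$, which re-enter through the actual $z_j(x)$ at points $x$ that almost surely lie on no level set. Concretely, if $z_1$ feeds $z_2$ with weight $v\neq 0$, then $z_2(x)=\tilde z_2(x)+v\Relu(z_1(x)-b_{z_1})$, and averaging $\rho_{b_{z_2}}\lr{z_2(x)}$ over $b_{z_1}$ does not return $\rho_{b_{z_2}}\lr{\tilde z_2(x)}$: the quantity your argument produces and the quantity in the statement are genuinely different integrals. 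So the ``bulk of the careful work'' you defer is the entire content of the proposition whenever $k\geq 2$ and the neurons occupy different layers. To your credit, you spotted a real subtlety---the paper's own proof applies the co-area formula to $x\mapsto(z_1(x),\ldots,z_k(x))$ as if that map were deterministic after conditioning, i.e.~it never engages with this point---but flagging the obstacle and asserting it dissolves by ``triangular structure'' is not a proof. For $k=1$, or when no $z_j$ feeds into another, the obstacle is vacuous and your argument is complete and matches the paper's.
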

\noindent We provide a detailed proof of Proposition \ref{P:vol-rep} in \S\ref{S:vol-rep-pf}. The intuition is that the image of the volume element $dx$ under $x\mapsto z(x)-S_i$ is the volume element
\[\norm{J_{z_1,\ldots, z_k}(x)}\, dx\]
from \eqref{E:Y-def}. The probability of an infinitesimal neighborhood $dx$ of $x$ belonging to a $(\nin-k)$-dimensional piece of $\mB_{\mN}$ is therefore the probability 
\begin{align*}
&\rho_{b_{z_1},\ldots, b_{z_k}}(z_1(x)-S_{i_1},\ldots, z_k(x)-S_{i_k})\\
&\quad\times~ \norm{J_{z_1,\ldots, z_k}(x)}\, dx
\end{align*}
that the vector of biases $(b_{z_j},\,\,j=1,\ldots,k)$ belongs to the image of $dx$ under map $\lr{z_j(x)-S_{i_j},j=1,\ldots,k}$ for some collection of breakpoints $S_{i_j}.$ The formal argument uses the co-area formula (see \eqref{E:co-area} and \eqref{E:co-area-2}).

\section{Proof of Theorem \ref{T:main}}
\label{S:main-proof}

\subsection{Proof of Proposition \ref{P:B-decomp}}\label{S:B-decomp-pf}
Recall that the non-linearity $\phi:\R\gives \R$ is continuous and piecewise linear with $T$ breakpoints $\xi_1<\cdots<\xi_T,$ so that, with $\xi_0=-\infty,\, \xi_{T+1}=\infty$, we have
\[t\in (\xi_i,\xi_{i+1})\quad \Rightarrow\quad \phi(t)=q_it + p_i\]
with $q_i\neq q_{i+1}.$ For each $x\in \R^{\nin},$ write
\begin{align*}
  Z_x^+&:=\left\{z~\big|~ z(x)-b_z\in (\xi_i,\xi_{i+1})\text{ and }q_i\neq 0\text{  for some }i\right\}\\
  Z_x^-&:=\left\{z~\big|~ z(x)-b_z\in (\xi_i,\xi_{i+1})\text{ and }q_i=0\text{  for some }i\right\}\\
  Z_x^0&:=\left\{z~\big|~ z(x)-b_z=\xi_i\text{  for some }i\right\} 
\end{align*}
Intuitively, $Z_x^+$ are the neurons that, at the input $x$ are open (i.e.~contribute to the gradient of the output $\mN(x)$) but do not change their contribution in a neighborhood of $x$, $Z_x^-$ are the neurons that are closed, and $Z_x^0$ are the neurons that, at $x$, produce a discontinuity in the derivative of $\mN.$ Thus, for example, if $\phi=\Relu,$ then
\[Z_x^*:=\set{z~|~\mathrm{sgn}(z(x)-b_z)=*},\quad *\in \set{+,-,0}.\]
We begin by proving that $\mB_{\mN}\subseteq \bigcup_z \twiddle{S}_z$ by checking the contrapositive
\begin{equation}\label{E:inclusion-1}
\lr{\bigcup_z \twiddle{S}_z}^c~\subseteq \mB_{\mN}^c.
\end{equation}
Fix $x\in \lr{\bigcup_z \twiddle{S}_z}^c$. Note that $Z_x^{\pm}$ are locally constant in the sense that there exists $\ep>0$ so that for all $y$ with $\norm{y-x}<\ep$, we have
\begin{equation}\label{E:locally-const}
Z_x^-\subseteq Z_y^-,\quad Z_x^+\subseteq Z_y^+,\quad Z_y^+\cup Z_y^0\subseteq Z_x^+\cup Z_x^0.
\end{equation}
Moreover, observe that if in the definition \eqref{E:phi-def} of $\phi$ none of the slopes $q_i$ equal $0$, then $Z_y^-=\emptyset$ for every $y$. To prove \eqref{E:inclusion-1}, consider any path $\gamma$ from the input to the output in the computational graph of $\mN.$ Such a path consists of $d+1$ neurons, one in each layer:
\[\gamma=\lr{z_\gamma^{(0)},\ldots, z_\gamma^{(d)}},\,\, \ell(z_\gamma^{(j)})=j.\]
To each path we may associate a sequence of weights:
\[w_\gamma^{(j)}~:=~\text{weight connecting }z_\gamma^{(j-1)}\text{ to }z_\gamma^{(j)},\quad j=1,\ldots,d.\]
We will also define
\[q_\gamma^{(j)}(x)~:=~\sum_{i=0}^T q_i {\bf 1}_{\set{z_\gamma^{(x)}-b_{z_\gamma^{(j)}}\in (\xi_i,\xi_{i+1}]}}.\]
For instance, if $\phi=\Relu$, then 
\[q_\gamma^{(j)}(x)~=~{\bf 1}_{\set{z_\gamma^{(j)}(x)-b_z\geq 0}},\]
and in general only one term in the definition of $q_\gamma^{(j)}(x)$ is non-zero for each $z.$ We may write 
\begin{equation}\label{E:N-rep}
\mN(y)=\sum_{i=1}^{\nin}y_i\sum_{\substack{\text{paths }\gamma:i\gives\text{out}}} \prod_{j=1}^d q_\gamma^{(j)}(y)w_\gamma^{(j)}~+~\mathrm{constant},
\end{equation}
Note that if $x\in \lr{\bigcup_z \twiddle{S}_z}^c$, then for any path $\gamma$ through a neuron $z\in Z_x^0$, we have
\[\exists~~j\text{ s.t. } z_\gamma^{(j)}\in Z_x^-.\]
This is an open condition in light of \eqref{E:locally-const}, and hence for all $y$ in a neighborhood of $x$ and for any path $\gamma$ through a neuron $z\in Z_x^0$ we also have that 
\[\exists~~j\text{ s.t. } z_\gamma^{(j)}\in Z_y^-.\]
Thus, since the summand in \eqref{E:N-rep} vanishes identically if $\gamma\cap Z_y^-\neq \emptyset$, we find that for $y$ in a neighborhood of any $x\in \lr{\bigcup_z \twiddle{S}_z}^c$ we may write
\begin{equation}\label{E:N-rep-2}
\mN(y)=\sum_{i=1}^{\nin}y_i\sum_{\substack{\text{paths }\gamma:i\gives\text{out}\\ \gamma\subset Z_x^+}} \prod_{j=1}^d q_\gamma^{(j)}(y)w_\gamma^{(j)}~+~\mathrm{constant}.
\end{equation}
But, again by \eqref{E:locally-const}, for any fixed $x$, all $y$ in a neighborhood of $x$ and each $z\in Z_x^+,$ we have $z\in Z_y^+$ as well. Thus, in particular, 
\[z(x)-b_z\in (\xi_{i},\xi_{i+1})\quad \Rightarrow\quad z(y)-b_z\in (\xi_i,\xi_{i+1}).\]
Thus, for $y$ sufficiently close to $x,$ we have for every path in the sum \eqref{E:N-rep-2} that 
\[q_\gamma^{(j)}(y)=q_\gamma^{(j)}(x).\]
Therefore, the partial derivatives $(\partial \mN / \partial y_i)(y)$ are independent of $y$ in a neighborhood of $x$ and hence continuous at $x$. This proves \eqref{E:inclusion-1}. Let us now prove the reverse inclusion:
\begin{equation}\label{E:inclusion-2}
\bigcup_{z}\twiddle{S}_z~~\subseteq~~\mB_{\mN}
\end{equation}
Note that, with probability $1,$ we have
\[\vol_{n_{\mathrm{in}-1}}(S_{z_1}\cap S_{z_2})=0\]
for any pair of distinct neurons $z_1,z_2.$ Note also that since $x\mapsto \mN(x)$ is continuous and piecewise linear, the set $\mB_{\mN}$ is closed. Thus, it is enough to show the slightly weaker inclusion 
\begin{equation}\label{E:inclusion-3}
\bigcup_{z}\lr{\twiddle{S}_z\big\backslash \bigcup_{\widehat{z}\neq z}S_{\widehat{z}}}~~\subseteq~~\mB_{\mN}
\end{equation}
since the closure of $\twiddle{S}_z\big\backslash \bigcup_{\widehat{z}\neq z}S_{\widehat{z}}$ equals $\twiddle{S}_z.$
Fix a neuron $z$ and suppose $x\in \twiddle{S}_z\big\backslash \bigcup_{\widehat{z}\neq z}S_{\widehat{z}}$. By definition, we have that for every neuron $\widehat{z}\neq z,$ either 
\[\widehat{z}\in Z_x^+\quad \text{or}\quad \widehat{z}\in Z_x^-.\]
This has two consequences. First, by \eqref{E:locally-const}, the map $y\mapsto z(y)$ is linear in a neighborhood of $x.$ Second, in a neighborhood of $x,$ the set $\twiddle{S}_z$ coincides with $S_z$. Hence, combining these facts, near $x$ the set $\twiddle{S}_z$ coincides with the hyperplane 
\begin{equation}\label{E:hyperplane}
\set{x~|~z(x)-b_z=\xi_i},\qquad \text{for some }i.
\end{equation}
We may take two sequences of inputs $y_n^+,y_n^-$ on opposite sides of this hyperplane so that
\[\lim_{n\gives \infty }y_n^+~=~\lim_{n\gives \infty }y_n^-~=~x\]
and
\[\phi'(z(y_n^{+})-b_z)=q_{i},\quad \phi'(z(y_n^{+})-b_z)=q_{i-1},~~~\forall n,\]
where the index $i$ the same as the one that defines the hyperplane \eqref{E:hyperplane}. Further, since $\mB_{\mN}$ has co-dimension $1$ (it is contained in the piecewise linear co-dimension $1$ set $\bigcup_z S_z$, for example), we may also assume that $y_n^+,y_n^-\not \in \mB_{\mN}.$ Consider any path $\gamma$ from the input to the output of the computational graph of $\mN$ passing through $z$ (so that  $z=z_\gamma^{(j)}\in \gamma$). By construction, for every $n$, we have
\[q_\gamma^{(j)}(y_n^+)\neq q_{\gamma}^{(j)}(y_n^-),\]
and hence, after passing to a subsequence, we may assume that the symmetric difference
\begin{equation}\label{E:path-set}
Z_{y_n^+}^+\Delta Z_{y_n^-}^+~\neq~ \emptyset
\end{equation}
of the paths that contribute to the representation \eqref{E:N-rep} for $y_{n}^+,\,y_n^-$ is fixed and non-empty (the latter since it always contains $z$). For any $y\not \in \mB_{\mN},$ we may write, for each $i=1,\ldots, \nin$
\begin{equation}\label{E:partial-diff}
\frac{\partial \mN}{\partial y_i}(y) ~=~\sum_{\substack{\mathrm{paths~}\gamma:i\gives\mathrm{out}\\ \gamma \subset Z_{y}^+}}\prod_{j=1}^d q_\gamma^{(j)}(y)w_\gamma^{(j)}.
\end{equation}
Substituting into this expression $y=y_n^{\pm}$, we find that there exists a non-empty collection $\Gamma$ of paths from the input to the output of $\mN$ so that
\[\frac{\partial \mN}{\partial y_i}(y_n^+)-\frac{\partial \mN}{\partial y_i}(y_n^-)~=~\sum_{\gamma \in \Gamma}a_j\prod_{j=1}^d c_\gamma^{(j)} w_\gamma^{(j)}\]
where
\[a_j\in \set{-1,1},\qquad c_\gamma^{(j)}\in \set{q_0,\ldots, q_T}.\]
Note that the expression above is a polynomial in the weights of $\mN$. Note also that, by construction, this polynomial is not identically zero due to the condition \eqref{E:path-set}. There are only finitely many such polynomials since both $a_j$ and $c_\gamma^{(j)}$ range over a finite alphabet. For each such non-zero polynomial, the set of weights at which it vanishes has co-dimension $1$. Hence, with probability $1,$ the difference $\frac{\partial \mN}{\partial y_i}(y_n^+)-\frac{\partial \mN}{\partial y_i}(y_n^-)$ is non-zero. This shows that the partial derivatives $\frac{\partial \mN}{\partial y_i}$ are not continuous at $x$ and hence that $x\in \mB_{\mN}.$ \hfill $\square$

\subsection{Proof of Proposition \ref{P:k-dim-pts}}\label{S:k-dim-pts-pf}
Fix distinct neurons $z_1,\ldots, z_k$ and suppose $x\in \twiddle{S}_{z_1,\ldots, z_k}$ but not in $\twiddle{S}_z$ for any $z\neq z_1,\ldots, z_k.$ After relabeling, we may assume that they are ordered by layer index:
\[\ell(z_1)~\leq~\cdots~\leq~\ell(z_k).\]
Since $x\in \mO$, we also have that $x\not \in S_z$ for any $z\neq z_1,\ldots, z_k.$ Thus, there exists a neighborhood $U$ of $x$ so $S_z \cap U=\emptyset$ for every $z\neq z_1,\ldots, z_k.$ Thus, there exists a neighborhood of $x$ on which $y\mapsto z_1(y)$ is linear. 

Hence, as explained near \eqref{E:hyperplane} above, $\twiddle{S}_{z_1}$ is a hyperplane near $x.$ We now restrict our inputs to this hyperplane and repeat this reasoning to see that, near $x,$ the set $\twiddle{S}_{z_1,z_2}$ is a hyperplane inside $\twiddle{S}_{z_1}$ and hence, near $x$, is the intersection of two hyperplanes in $\R^{\nin}$. Continuing in this way shows that in a neighborhood of $x,$ the set $\twiddle{S}_{z_1,\ldots, z_k}$ is equal to the intersection of $k$ hyperplanes in $\R^{\nin}.$ Thus, $\twiddle{S}_{z_1,\ldots, z_k}\backslash \lr{\bigcup_{z\neq z_1,\ldots, z_k}\twiddle{S}_z}^c$ is precisely the intersection of $k$ hyperplanes in a neighborhood of each of its points.
\hfill $\square$

\subsection{Proof of Proposition \ref{P:vol-rep}}\label{S:vol-rep-pf}
Let $z_1,\ldots, z_k$ be distinct neurons in $\mN,$ and fix a compact set $K\subset \R^{\nin}$. We seek to compute the mean of $\vol_{\nin-k}\lr{\twiddle{S}_{z_1,\ldots, z_k}\cap K}$, which we may rewrite as
\begin{align}\label{E:integral-1}
&\int_{S_{z_1,\ldots, z_k}\cap K}{\bf 1}_{\left\{\substack{z_j\text{ is good at }x\\j=1,\ldots, k}\right\}}\,\mathrm{dvol}_{\nin-k}(x)\\
\quad &=\sum_{i_1,\ldots,i_k=1}^T \int_{S_{z_1,\ldots, z_k}^{(\xi_{i_1},\ldots, \xi_{i_k})}\cap K}{\bf 1}_{\left\{\substack{z_j\text{ is good at }x\\j=1,\ldots, k}\right\}}\mathrm{dvol}_{\nin-k}(x),\notag
\end{align}
where we've set 
\[S_{z_1,\ldots, z_k}^{(\xi_{i_1},\ldots, \xi_{i_k})}=\set{x~|~z_{j}(x)-b_{z_j}=\xi_{i_{j}},\,\,\,j=1,\ldots,k}.\]
Note that the map $x\mapsto \lr{z_1(x),\ldots, z_k(x)}$ is Lipschitz, and recall the co-area formula, which says that if $\psi\in L^1(\R^n)$ and $g:\R^n\gives \R^m$ with $m\leq n$ is Lipschitz, then 
\begin{equation}\label{E:co-area}
\int_{\R^m}\int_{g^{-1}(t)} \psi(x)\,\mathrm{dvol}_{n-m}(x)dt
\end{equation}
equals
\begin{equation}\label{E:co-area-2}
\int_{\R^n} \psi(x)\norm{Jg(x)}\,\mathrm{dvol}_n(x),
\end{equation}
where $Jg$ is the $m\x n$ Jacobian of $g$ and 
\[\norm{Jg(x)}=\det\lr{(Jg(x))(Jg(x))^T}^{1/2}.\]
We assumed that the biases $b_{z_1},\ldots, b_{z_j}$ have a joint conditional density 
\[\rho_{{\bf b}_{{\bf z}}}=\rho_{b_{z_1},\ldots, b_{z_k}}\]
given all other weights and biases. The mean of the term in \eqref{E:integral-1} corresponding to a fixed $\xi=\lr{\xi_{i_1},\ldots, \xi_{i_k}}$ over the conditional distribution of $b_{z_1},\ldots, b_{z_j}$ is therefore
\[\int_{\R^k}d{\bf b}\rho_{{\bf b}_{{\bf z}}}({\bf b}) \int_{\set{{\bf z}-{\bf b}={\bf \xi}}\cap K}{\bf 1}_{\left\{\substack{z_j\text{ is good at }x\\j=1,\ldots, k}\right\}}\,\mathrm{dvol}_{\nin-k}(x),\]
where we've abbreviated ${\bf b}=\lr{b_1,\ldots, b_k}$ as well as ${\bf z}(x)=\lr{z_1(x),\ldots, z_k(x)}$. This can rewritten as
\[\int_{\R^k}d{\bf b} \int_{\set{{\bf z}={\bf b}}\cap K}\rho_{{\bf b}_{{\bf z}}} ({\bf z}(x)-{\bf \xi}){\bf 1}_{\left\{\substack{z_j\text{ is good at }x\\j=1,\ldots, k}\right\}}\mathrm{dvol}_{n_0-k}(x).\]
Thus, applying the co-area formula \eqref{E:co-area}, \eqref{E:co-area-2} shows that the average of \eqref{E:integral-1} over the conditional distribution of $b_{z_1},\ldots, b_{z_j}$ is precisely
\[\int_K Y_{z_1,\ldots, z_k}(x)\, dx.\]
Taking the average over the remaining weighs and biases, we may commute the expectation $\E{\cdot}$ with the $dx$ integral since the integrand is non-negative. This completes the proof of Proposition \ref{P:vol-rep}.\hfill $\square$

\section{Proof of Corollary \ref{C:init-formal}}\label{S:init-pf}
\noindent We begin by proving the upper bound in \eqref{E:UB-init}. By Theorem \ref{T:main}, $\E{\vol\lr{\mB_{\mN,k}\cap K}}$ equals
\[\sum_{\substack{\mathrm{distinct~neurons~}}\\z_1,\ldots, z_k}\sum_{i_1,\ldots, i_k=1}^T\int_K \E{Y_{z_1,\ldots, z_k}^{(\xi_{i_1},\ldots, \xi_{i_k})}(x)}(x)dx,\]
where, as in \eqref{E:Y-def}, $Y_{z_1,\ldots, z_k}^{(\xi_{i_1},\ldots, \xi_{i_k})}(x)$ is
\[\norm{J_{z_1,\ldots, z_k}(x)} \,\rho_{b_{z_1},\ldots, b_{z_k}}(z_1(x)-\xi_{i_1},\ldots, z_k(z)-\xi_{i_k})\]
times the indicator function of the even that $z_j$ is good at $x$ for every $j.$ When the weights and biases of $\mN$ are independent, we may write $\rho_{b_{z_1},\ldots, b_{z_k}}(b_1,\ldots, b_k)$ as
\[\prod_{j=1}^k \rho_{b_{z_j}}(b_j)~\leq~ \lr{\sup_{\mathrm{neurons~}z}\sup_{b\in \R}\rho_{b_z}(b)}^k=C_{\mathrm{bias}}^k.\]
Hence, 
\[Y_{z_1,\ldots, z_k}(x)~\leq~ C_{\mathrm{bias}}^k\, \lr{\det \lr{J_{z_1,\ldots, z_k}(x)\lr{J_{z_1,\ldots, z_k}(x)}^T}}^{1/2}.\]
Note that 
\[J_{z_1,\ldots, z_k}(x)\lr{J_{z_1,\ldots, z_k}(x)}^T=\mathrm{Gram}\lr{\nabla z_1(x),\ldots, \nabla z_k(x)},\]
where for any $v_i\in \R^n$
\[\mathrm{Gram}(v_1,\ldots, v_k)_{i,j}=\inprod{v_i}{v_j}\]
is the associated Gram matrix. The Gram identity says that $\det \lr{J_{z_1,\ldots, z_k}(x)\lr{J_{z_1,\ldots, z_k}(x)}^T}^{1/2}$ equals
\[\norm{\nabla z_1(x)\wedge\cdots \wedge \nabla z_k(x)},\]
which is the the $k$-dimensional volume of the parallelopiped in $\R^{\nin}$ spanned by $\set{\nabla z_j(x),\, j=1,\ldots, k}.$ We thus have
\[\det \lr{J_{z_1,\ldots, z_k}(x)\lr{J_{z_1,\ldots, z_k}(x)}^T}^{1/2}~\leq~\prod_{j=1}^k \norm{\nabla z_j(x)}.\]
The estimate \eqref{E:bounded-grad} proves the upper bound \eqref{E:UB-init}. For the special case of $\phi=\Relu$ we use the AM-GM inequality and Jensen's inequality to write
\begin{align*}
\E{\prod_{j=1}^k \norm{\nabla z_j(x)}}&\leq \E{\lr{\frac{1}{k}\sum_{j=1}^k \norm{\nabla z_j(x)}}^k}\\
&\leq\frac{1}{k}\sum_{j=1}^k \E{\norm{\nabla z_j}^k}.
\end{align*}
Therefore, by Theorem 1 of \citet{hanin2018products}, there exist $C_1,C_2>0$ so that 
\[\E{\prod_{j=1}^k \norm{\nabla z_j(x)}}\leq \lr{C_1e^{C_2 \sum_{j=1}^d \frac{1}{n_j}}}^k.\]
This completes the proof of the upper bound in \eqref{E:UB-init}. To prove the power bound, lower bound in \eqref{E:UB-init} we must argue in a different way. Namely, we will induct on $k$ and use the following facts to prove the base case $k=1$:
\begin{enumerate}
\item At initialization, for each fixed input $x,$ the random variables $\set{{\bf 1}_{\set{z(x)>b_z}}}$ are independent Bernoulli random variables with parameter $1/2.$ This fact is proved in Proposition 2 of \citet{hanin2018products}. In particular, the event $\set{z\text{ is good at }x}$, which occurs when there exists a layer $j\in \ell(z)+1,\ldots, d$ in which $z(x)\leq b_z$ for every neuron, is independent of $\set{z(x),\,b_z}$ and satisfies
  \begin{equation}\label{E:open-est}
\P\lr{z\text{ is good at }x}\geq 1 -  \sum_{j=1}^d 2^{-n_j}.
\end{equation}
\item At initialization, for each fixed input $x$, we have
  \begin{equation}\label{E:squared-act-est}
\frac{1}{2}\E{z(x)^2}=\frac{\norm{x}^2}{\nin}+\sum_{j=1}^{\ell(z)}\sigma_{b_j}^2,
\end{equation}
where $\sigma_{b_j}^2:=\Var[\mathrm{biases~at~layer~}j]$. This is Equation (11) in the proof of Theorem 5 from \citet{hanin2018start}.
\item At initialization, for every neuron $z$ and each input $x,$ we have
\begin{equation}\label{E:grad-est}
\E{\norm{\nabla z(x)}^2}=2.
\end{equation}
This follows easily from Theorem 1 of \citet{hanin2018neural}.
\item At initialization, for each $1\leq j \leq \nin$ and every $x\in \R^{\nin}$
  \begin{equation}\label{E:log-grad-est}
\E{\log\lr{\nin \lr{\frac{\partial z}{\partial x_j}(x)}^2}}~~=~~ -\frac{5}{2}\sum_{j=1}^{\ell(z)}\frac{1}{n_j}
\end{equation}
plus $O\lr{ \sum_{j=1}^{\ell(z)}\frac{1}{n_j^2}}$, where $n_j$ is the width of the $j^{\text{th}}$ hidden layer and the implied constant depends only on the $4^{\text{th}}$ moment of the measure $\mu$ according to which weights are distributed. This estimate follows immediately by combining Corollary 26 and Proposition 28 in \citet{hanin2018products}.
\end{enumerate}
We begin by proving the lower bound in \eqref{E:UB-init} when $k=1.$ We use \eqref{E:open-est} to see that $\E{\vol_{\nin-1}\lr{\mB_{\mN}\cap K}}$ is bounded below by 
\[\bigg(1-\sum_{j=1}^d2^{-n_j}\bigg) \sum_{\mathrm{neurons~}z}\int_K \E{\norm{\nabla z(x)}\rho_{b_z}(z(x))}\, dx.\]
Next, we bound the integrand. Fix $x\in \R^{\nin}$ and a parameter $\eta>0$ to be chosen later. The integrand $\E{\norm{\nabla z(x)}\rho_{b_z}(z(x))}$ is bounded below by
\begin{align*}
  & \E{\norm{\nabla z(x)}\rho_{b_z}(z(x)){\bf 1}_{\set{\abs{z(x)}}\leq \eta}}\\
&~~~\geq~ \left[\inf_{\abs{b}\leq \eta}\rho_{b_z}(b) \right]\E{\norm{\nabla z(x)}{\bf 1}_{\set{\abs{z(x)}\leq \eta}}},
\end{align*}
which is bounded below by
\[\left[\inf_{\abs{b}\leq \eta}\rho_{b_z}(b) \right]\left[\E{\norm{\nabla z(x)}} -\E{\norm{\nabla z(x)}{\bf 1}_{\set{\abs{z(x)}}> \eta}} \right].\]
Using Cauchy-Schwarz, the term $\E{\norm{\nabla z(x)}{\bf 1}_{\set{\abs{z(x)}}> \eta}}$ is bounded above by 
\[\lr{\E{\norm{\nabla z(x)}}^2\P\lr{\abs{z(x)}> \eta}}^{1/2},\]
which using \eqref{E:grad-est} and \eqref{E:squared-act-est} together with Markov's inequality, is bounded above by 
\[\frac{2}{\eta^{1/2}}\lr{\frac{\norm{x}^2}{ \nin} +\sum_{j=1}^{\ell(z)}\sigma_{b_j}^2}^{1/2}.\]
Next, using Jensen's inequality twice, we write
\begin{align*}
\log \E{\norm{\nabla z(x)}}&~\geq~ \frac{1}{2}\E{\log\lr{\norm{\nabla z(x)}^2}}\\
&~=~\frac{1}{2}\E{\log\lr{\sum_{j=1}^{\nin} \lr{\frac{\partial z}{\partial x_j}(x)}^2}}\\
&~\geq~\frac{1}{2}\E{\log \lr{\nin^{1/2} \frac{\partial z}{\partial x_j}(x)}^2}\\
&~=~-\frac{5}{4}\sum_{j=1}^{\ell(z)}\frac{1}{n_j}+O\lr{\sum_{j=1}^{\ell(z)}\frac{1}{n_j^2}},
\end{align*}
where in the last inequality we applied \eqref{E:log-grad-est}. Putting this all together, we find that exists $c>0$ so that
\[  \E{\norm{\nabla z(x)}\rho_{b_z}(z(x))}~\geq~ c \left[\inf_{\abs{b}\leq \eta}\rho_{b_z}(b) \right],\]
where 
\[\eta~\geq~4\lr{\frac{\norm{x}^2}{\nin}+\sum_{j=1}^d \sigma_{b_j}^2}\, e^{\frac{5}{4}\sum_{j=1}^d \frac{1}{n_j} + O\lr{\sum_{j=1}^{\ell(z)}\frac{1}{n_j^2}}}.\]
In particular, we may take 
\[\eta=\lr{\frac{\sup_{x\in K}\norm{x}^2}{\nin}+\sum_{j=1}^d \sigma_{b_j}^2} e^{C\sum_{j=1}^d\frac{1}{n_j}}\]
for $C$ sufficiently large. This completes the proof of the lower bound in \eqref{E:UB-init} when $k=1$. To complete the proof of Corollary \ref{C:init-formal}, suppose we have proved the lower bound in \eqref{E:UB-init} for all $\Relu$ networks $\mN$ and all collections of $k-1$ distinct neurons. We may assume after relabeling  that the neurons $z_1,\ldots, z_k$ are ordered by layer index:
\[\ell(z_1)\leq \cdots\leq \ell(z_k).\]
With probability $1,$ the set $S_{z_1}\subset \R^{\nin}$ is piecewise linear, co-dimension $1$ with finitely many pieces, which we denote by $P_\alpha$. We may therefore rewrite $\vol_{\nin-k}\lr{\twiddle{S}_{z_1,\ldots, z_k}\cap K}$ as
\[\sum_\alpha \vol_{\nin-k}\lr{\twiddle{S}_{z_2,\ldots, z_k}\cap P_\alpha \cap K}.\]
We now define a new neural network $\mN_\alpha$, obtained by restricting $\mN$ to $P_\alpha.$ The input dimension for $\mN_\alpha$ equals $\nin-1,$ and the weights and biases of $\mN_\alpha$ satisfy all the assumptions of Corollary \ref{C:init-formal}. We can now apply our inductive hypothesis to the $k-1$ neurons $z_2,\ldots, z_k$ in $\mN_\alpha$ and to the set $K\cap P_\alpha.$ This gives
\begin{align*}
&\E{\sum_\alpha \vol_{\nin-k}\lr{\twiddle{S}_{z_2,\ldots, z_k}\cap P_\alpha \cap K}}\\
&\quad\geq \lr{\inf_z\inf_{\abs{b}\leq \eta}\rho_{b_z}(b)}^{k-1}\E{\vol_{\nin-1}\lr{P_\alpha\cap K}}.
\end{align*}
Summing this lower bound over $\alpha$ yields
\begin{align*}
&\E{\vol_{\nin-k}\lr{\twiddle{S}_{z_1,\ldots, z_k}\cap K}}\\
&\quad\geq \lr{\inf_z\inf_{\abs{b}\leq \eta}\rho_{b_z}(b)}^{k-1}\E{\vol_{\nin-1} \lr{\twiddle{S}_{z_1}\cap K}}.
\end{align*}
Applying the inductive hypothesis once more completes the proof. \hfill$\square$

\section{Proof of Corollary \ref{C:dist-formal}}\label{S:dist-pf}
We will need the following observation. 
\begin{lemma}\label{L:tube-vol}
  Fix a positive integer $n\geq 1$, and let $S\subseteq \R^n$ be a compact continuous piecewise linear submanifold with finitely many pieces. Define $S_0=\emptyset$ and let $S_k$ be the union of the interiors of all $k$-dimensional pieces of $S\backslash (S_0\cup\cdots\cup S_{k-1})$. Denote by $T_\ep(X)$ the $\ep-$tubular neighborhood of any $X\subset \R^n.$ We have
\[\vol_{n}\lr{T_{\ep}(S)}~\leq~ \sum_{k=0}^n \w_{n-k} \ep^{n-k}\vol_k\lr{S_k},\]
where $ \w_{d}:=\text{volume of ball of radius }1\text{ in }\R^d.$
\end{lemma}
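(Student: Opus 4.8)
The plan is to bound the tube volume one stratum at a time, exploiting that each piece of $S$ is \emph{flat}, so that its normal tube is an exact Cartesian product and splits cleanly as (base volume) $\times$ (normal ball volume).

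First I would partition the tube according to where nearest points lie. Since $S$ is compact, every $y\in T_\ep(S)$ has at least one nearest point $p\in S$, and since the strata $S_0,\ldots,S_n$ partition $S$, I can write $T_\ep(S)=\bigcup_{k=0}^n A_k$, where $A_k$ denotes the set of $y\in T_\ep(S)$ possessing at least one nearest point in $S_k$. By subadditivity of Lebesgue measure,
\[\vol_n(T_\ep(S))\leq \sum_{k=0}^n \vol_n(A_k),\]
so it suffices to prove $\vol_n(A_k)\leq \w_{n-k}\,\ep^{n-k}\,\vol_k(S_k)$ for each $k$.

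The key geometric step is an optimality (orthogonality) observation. Fix $k$ and take $y\in A_k$ with nearest point $p\in S_k$. By construction $p$ lies in the relative interior of some $k$-dimensional flat piece $Q$, whose affine hull I call $H$, a $k$-dimensional affine subspace with $(n-k)$-dimensional direction-complement $H^\perp$. Because $p$ is a relative-interior minimizer of $u\mapsto \norm{y-u}$ over $Q$, differentiating along each tangent direction of $Q$ forces $y-p\perp H$, i.e.\ $y-p\in H^\perp$, while $\norm{y-p}\leq \ep$. Hence $y$ lies in $\set{p+v~|~p\in Q,\ v\in H^\perp,\ \norm{v}\leq \ep}$, and ranging over the finitely many $k$-pieces $Q$ comprising $S_k$ shows that $A_k$ is contained in the union of these normal $\ep$-tubes.

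Finally I would compute each normal tube's volume exactly. Since $Q$ is flat, its normal space $H^\perp$ is constant along $Q$, so in orthonormal coordinates adapted to the splitting $H\oplus H^\perp$ the set $\set{p+v~|~p\in Q,\ v\in H^\perp,\ \norm{v}\leq\ep}$ is isometric to $Q\times B^{\,n-k}_\ep$, with $n$-volume exactly $\vol_k(Q)\cdot \w_{n-k}\ep^{n-k}$. Summing over the pieces $Q$ of $S_k$ gives $\vol_n(A_k)\leq \w_{n-k}\ep^{n-k}\vol_k(S_k)$, and summing over $k$ yields the claim. I expect the only real (and minor) obstacle to be bookkeeping for the measure-zero exceptional set: points $y$ whose nearest point is non-unique or lands on a boundary between pieces. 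These are handled by assigning each such $y$ to the lowest stratum containing one of its nearest points; the relative-interior optimality argument then applies verbatim on that lower-dimensional flat piece, so nothing is undercounted and the union bound above is valid as stated.
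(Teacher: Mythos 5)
Your proof is correct and follows essentially the same approach as the paper's: both rest on the observation that a tube point whose nearest point lies in the relative interior of a flat $k$-dimensional piece sits in that piece's normal $\ep$-ball bundle (by first-order optimality/orthogonality), whose volume splits exactly as $\vol_k(S_k)\,\w_{n-k}\ep^{n-k}$. The only difference is organizational — you partition the tube directly by the stratum of the nearest point and apply a union bound, while the paper packages the identical argument as a downward induction that peels off the top-dimensional stratum first.
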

\begin{proof}
Define $d$ to be the maximal dimension of the linear pieces in $S.$ Let $x\in T_{\ep}(S).$ Suppose $x\not \in T_{\ep}(S_k)$ for all $k=0,\ldots, d-1.$ Then the intersection of the ball of radius $\ep$ around $s$ with $S$ is a ball inside $S_d\cong U \subset \R^d$. Using the convexity of this ball, there exists a point $y$ in $S_d$ so that the vector $x-y$ is parallel to the normal vector to $S_d$ at $y$. Hence, $x$ belong to the normal $\ep$-ball bundle $B_\ep(N^*(S_d))$ (i.e.~the union of the fiber-wise $\ep$-balls in the normal bundle to $S_d$). Therefore, we have
\[\vol_n\lr{T_\ep(S)}\leq \vol_n(B_\ep(N^*(S_d)))+ \vol_n\lr{T_\ep(S_{\leq d-1})},\]
where we abbreviated $S_{\leq d-1}:=\overline{\bigcup_{k=0}^{d-1} S_k}.$ Using that
\begin{align*}
 \vol_n(B_\ep(N^*(S_d)))&=\vol_{d}(S_d)\vol_{n-d}(B_\ep(\R^{n-d}))\\
&= \vol_{d}(S_d)\ep^{n-d}\w_{n-d}
\end{align*}
and repeating this argument $d-1$ times completes the proof. 
\end{proof}
We are now ready to prove Corollary \ref{C:dist}. Let $x\in K=[0,1]^{\nin}$ be uniformly chosen. Then, for any $\ep>0$, using Markov's inequality and Lemma \ref{L:tube-vol}, we have
\begin{align*}
&\E{\mathrm{distance}(x,\mB_{\mN})}\\
&\quad\geq ~ \ep \P\lr{\mathrm{distance}(x,\mB_{\mN})>\ep}\\
&\quad= ~ \ep\lr{1- \P\lr{\mathrm{distance}(x,\mB_{\mN})\leq \ep}}\\
&\quad= ~ \ep\lr{1- \E{\vol_{\nin}\lr{T_\ep\lr{\mB_{\mN}}}}}\\
&\quad\geq  ~ \ep\lr{1- \sum_{k=1}^{\nin}\w_{\nin-k}\ep^{n_{in-k}}\E{\vol_{n_{in-k}}(\mB_{\mN,k})}}\\
&\quad\geq  ~ \ep\lr{1- \sum_{k=1}^{\nin}(C_{\mathrm{grad}}C_{\mathrm{bias}}\ep \#\set{\mathrm{neurons}})^k}\\
&\quad\geq  ~ \ep\lr{1- C'C_{\mathrm{grad}}C_{\mathrm{bias}}\ep\#\set{\mathrm{neurons}}}
\end{align*}
for some $C'>0.$ Taking $\ep$ to be a small constant times $1/(C_{\mathrm{grad}}\#\set{\mathrm{neurons}})$ completes the proof. \hfill $\square$
\end{document}